\newtheorem{proposition}{Proposition}[section]
\newtheorem{lemma}{Lemma}[section]
\newtheorem{definition}{Definition}[section]
\newtheorem{constraint}{Constraint}[section]
\title{BernNet: Learning Arbitrary Graph Spectral Filters via Bernstein Approximation}
\author{%
  Mingguo He \\
  Renmin University of China\\
  \texttt{mingguo@ruc.edu.cn}
  \And
  Zhewei Wei\footnotemark[1] \\
  Renmin University of China\\
  \texttt{zhewei@ruc.edu.cn}
  \AND
  Zengfeng Huang\\
  Fudan University\\
  \texttt{huangzf@fudan.edu.cn}
  \And
  Hongteng Xu\footnotemark[1]\\
  Renmin University of China\\
  \texttt{hongtengxu@ruc.edu.cn}
}
\begin{document}

\maketitle
\renewcommand{\thefootnote}{\fnsymbol{footnote}}
\footnotetext[1]{Zhewei Wei and Hongteng Xu are the corresponding authors. Work partially done at Gaoling School of Artificial Intelligence, Beijing Key Laboratory of Big Data Management and Analysis Methods, MOE Key Lab of Data Engineering and Knowledge Engineering, Renmin University of China, and Pazhou Lab, Guangzhou, 510330, China.} 

\begin{abstract}
Many representative graph neural networks, $e.g.$, GPR-GNN and ChebNet, approximate graph convolutions with graph spectral filters. 
However, existing work either applies predefined filter weights or learns them without necessary constraints, which may lead to oversimplified or ill-posed filters. 
To overcome these issues, we propose $\textit{BernNet}$, a novel graph neural network with theoretical support that provides a simple but effective scheme for designing and learning arbitrary graph spectral filters.
In particular, for any filter over the normalized Laplacian spectrum of a graph, our BernNet estimates it by an order-$K$ Bernstein polynomial approximation and designs its spectral property by setting the coefficients of the Bernstein basis. 
Moreover, we can learn the coefficients (and the corresponding filter weights) based on observed graphs and their associated signals and thus achieve the BernNet specialized for the data. 
Our experiments demonstrate that BernNet can learn arbitrary spectral filters, including complicated band-rejection and comb filters, and it achieves superior performance in real-world graph modeling tasks. Code is available at \url{https://github.com/ivam-he/BernNet}.
\end{abstract}

\section{Introduction}\label{sec_intro}
Graph neural networks (GNNs) have received extensive attention from researchers due to their excellent performance on various graph learning tasks such as social analysis~\cite{qiu2018deepinf,li2019encoding,tong2019leveraging}, drug discovery~\cite{jiang2021could,rathi2019practical}, traffic forecasting~\cite{li2017diffusion,bogaerts2020graph,cui2019traffic}, recommendation system~\cite{ying2018graph,wu2019session} and computer vision~\cite{zhao2019semantic,chen2019multi}. 
Recent studies suggest that many popular GNNs operate as polynomial graph spectral filters~\cite{Chebnet,kipf2016semi,chien2021GPR-GNN,levie2018cayleynets,bianchi2021ARMA,xu2018graphWavelet}. 
Specifically, we denote an undirected graph with node set $V$ and edge set $E$ as $G=(V,E)$, whose adjacency matrix is $\mathbf{A}$. 
Given a signal $\mathbf{x}=[x]\in R^n$ on the graph, where $n=|V|$ is the number of nodes, we can formulate its graph spectral filtering operation as $\sum\nolimits_{k=0}^K w_k \mathbf{L}^k\mathbf{x}$, $w_k$'s are the filter weights,  $\mathbf{L}=\mathbf{I}-\mathbf{D}^{-1/2}\mathbf{A}\mathbf{D}^{-1/2}$ is the symmetric normalized Laplacian matrix of $G$, and $\mathbf{D}$ is the diagonal degree matrix of $\mathbf{A}$. Another equivalent polynomial filtering operation is $\sum\nolimits_{k=0}^K c_k \mathbf{P}^k\mathbf{x}$, where $\mathbf{P}=\mathbf{D}^{-1/2}\mathbf{A}\mathbf{D}^{-1/2}$ is the normalized adjacency matrix and $c_k$'s are the filter weights.


We can broadly categorize the GNNs applying the above filtering operation into two classes, depending on whether they design the filter weights or learn them based on observed graphs. 
Some representative models in these two classes are shown below.

\begin{itemize}
    \item \textbf{The GNNs driven by designing filters:} GCN~\cite{kipf2016semi} uses a simplified first-order Chebyshev polynomial, which is proven to be a low-pass filter~\cite{balcilar2021analyzing,wu2019sgc,xu2020graphheat,zhu2021interpreting}. 
    APPNP~\cite{appnp} utilizes Personalized PageRank (PPR) to set the filter weights and achieves a low-pass filter as well~\cite{klicpera2019diffusion,zhu2021interpreting}. 
    GNN-LF/HF~\cite{zhu2021interpreting} designs filter weights from the perspective of graph optimization functions, which can simulate high- and low-pass filters. 
    \item \textbf{The GNNs driven by learning filters:}
    ChebNet~\cite{Chebnet} approximates the filtering operation with Chebyshev polynomials, and learns a filter via trainable weights of the Chebyshev basis.  
    GPR-GNN~\cite{chien2021GPR-GNN} learns a polynomial filter by directly performing gradient descent on the filter weights, which can derive high- or low-pass filters. ARMA~\cite{bianchi2021ARMA} learns a rational filter via the family of Auto-Regressive Moving Average filters~\cite{narang2013signal}. 
\end{itemize}


Although the above GNNs achieve some encouraging results in various graph modeling tasks, they still suffer from two major drawbacks. 
Firstly, most existing methods focus on designing or learning simple filters ($e.g.$, low- and/or high-pass filters), while real-world applications often require much more complex filters such as band-rejection and comb filters. 
To the best of our knowledge, none of the existing work supports designing arbitrary interpretable spectral filters. The GNNs driven by learning filters can learn arbitrary filters in theory, but they cannot intuitively show what filters they have learned. In other words, their interpretability is poor. For example, GPR-GNN~\cite{chien2021GPR-GNN} learns the filter weights $w_k$'s but only proves a small subset of the learnt weight sequences corresponds to low- or high-pass filters.
Secondly, the GNNs often design their filters empirically or learn the filter weights without any necessary constraints. 
As a result, their filter weights often have poor  controllability. 
For example, GNN-LF/HF~\cite{zhu2021interpreting} designs its filters with a complex and non-intuitive polynomial with difficult-to-tune hyperparameters. 
The multi-layer GCN/SGC~\cite{kipf2016semi,wu2019sgc} leads to ``ill-posed'' filters ($i.e.$, those deriving negative spectral responses).
Additionally, the filters learned by GPR-GNN~\cite{chien2021GPR-GNN} or ChebNet~\cite{Chebnet}  have a chance to be ill-posed as well.


\begin{figure}
    \centering
    \includegraphics[height=4cm]{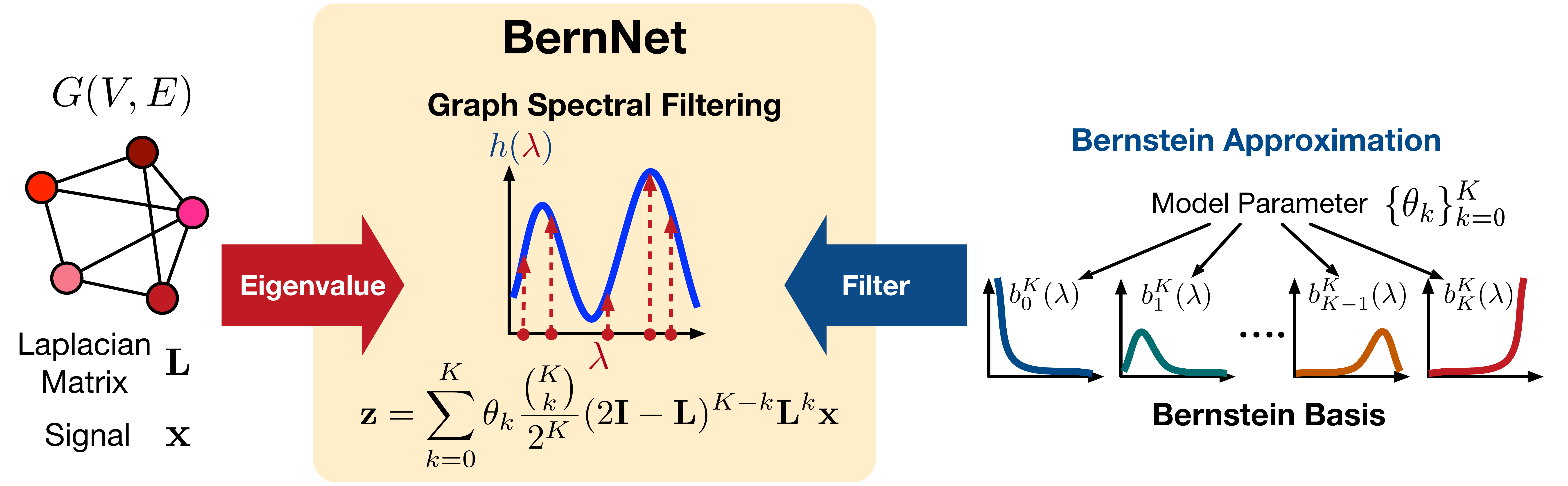}
    \vspace{-3mm}
    \caption{An illustration of the proposed BernNet.}
    \label{fig:scheme}
\end{figure}

To overcome the above issues, we propose a novel graph neural network called \textit{BernNet}, which provides an effective algorithmic framework for designing and learning arbitrary graph spectral filters.
As illustrated in Figure~\ref{fig:scheme}, for an arbitrary spectral filter $h:[0,2] \mapsto [0,1]$ over the spectrum of the symmetric normalized Laplacian $\mathbf{L}$, our BernNet approximates $h$ by a $K$-order Bernstein polynomial approximation, $i.e.$, $h(\lambda)=\sum\nolimits_{k=0}^{K}\theta_k b_{k}^K(\lambda)$. 
The non-negative coefficients $\{\theta_k\}_{k=0}^{K}$ of the Bernstein basis $\{b_k^K(\lambda)\}_{k=0}^{K}$ work as the model parameter, which can be interpreted as $h(2k/K)$, $k=0,\ldots, K$ ($i.e.$, the filter values uniformly sampled from $[0,2]$). 
By designing or learning the $\theta_k$'s, we can obtain various spectral filters, whose filtering operation can be formulated as $\sum\nolimits_{k=0}^{K}\theta_k \frac{1}{2^K}\binom{K}{k} (2\mathbf{I}-\mathbf{L})^{K-k}\mathbf{L}^{k}\mathbf{x}$, where $\mathbf{x}$ is the graph signal. 
We further demonstrate the rationality of our BernNet from the perspective of graph optimization --- any valid polynomial filers, $i.e.$, those polynomial functions mapping $[0,2]$ to $[0,1]$, can always be expressed by our BernNet, and accordingly, the filters learned by our BernNet are always valid.
Finally, we conduct experiments to demonstrate that 1) BernNet can learn arbitrary graph spectral filters (e.g., band-rejection, comb, low-band-pass, etc.), and 2) BernNet achieves superior performance on real-world datasets.

\section{BernNet}
\subsection{Bernstein approximation of spectral filters}
\label{ber:filter}
Given an arbitrary filter function  $h: [0,2] \mapsto [0,1]$, let $\mathbf{L}=\mathbf{U}\mathbf{\Lambda}\mathbf{U}^T$ denote the eigendecomposition of the  symmetric normalized Laplacian matrix $\mathbf{L}$, where $\mathbf{U}$ is the matrix of eigenvectors and $\mathbf{\Lambda}=\textit{diag}[\lambda_1,...,\lambda_n]$ is the diagonal matrix of eigenvalues. We use 
\begin{equation}\label{eq:eigen}
    h(\mathbf{L})\mathbf{x} = \mathbf{U}h(\mathbf{\Lambda})\mathbf{U}^T\mathbf{x} = \mathbf{U}  \textit{diag}[h(\lambda_1),...,h(\lambda_n)]  \mathbf{U}^T  \mathbf{x} 
\end{equation}
to denote a spectral filter on graph signal $\mathbf{x}$. 
The key of our work is approximate $h(\mathbf{L})$ (or, equivalently, $h(\lambda)$).
For this purpose, we leverage the Bernstein basis and Bernstein polynomial approximation defined below. 

\begin{definition}[~\cite{farouki2012bernstein}]~\textbf{(Bernstein polynomial approximation)}
\label{def:bernstein}
Given an arbitrary continuous function $f(t)$ on $t\in [0,1]$, the Bernstein polynomial approximation (of order $K$) for $f$ is defined as
\begin{equation}
    \begin{aligned}
    \label{eqn:bernstein_new}
        p_K(t) :=\sum\limits_{k=0}^{K}\theta_k\cdot b_k^K(t)=\sum\limits_{k=0}^{K}f\left(\frac{k}{K}\right)\cdot \binom{K}{k} (1-t)^{K-k} t^k.
    \end{aligned}
\end{equation}
Here, for $k=0,...,K$, $b_k^K(t)=\binom{K}{k}(1-t)^{K-k}t^k$ is the $k$-th Bernstein base, and $\theta_k=f(\frac{k}{K})$ is the function value at $k/K$, which works as the coefficient of $b_k^K(t)$. 
\end{definition}
 
\begin{lemma}[~\cite{farouki2012bernstein}]
\label{lemma_bern}  Given an arbitrary continuous function $f(t)$ on $t\in [0,1]$, let $p_K(t)$ denote the Bernstein approximation of $f(t)$ as defined in Equation~\eqref{eqn:bernstein_new}. We have $p_K(t)\rightarrow f(t)$ as $K \rightarrow \infty$.
\end{lemma}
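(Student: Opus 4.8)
The plan is to recognize that this is the classical Bernstein-form proof of the Weierstrass approximation theorem, and to establish (uniform, hence pointwise) convergence using only the compactness of $[0,1]$ together with a few elementary identities for the Bernstein basis. The key observation is that for each fixed $t\in[0,1]$ the quantities $b_k^K(t)=\binom{K}{k}(1-t)^{K-k}t^k$ are exactly the probabilities of a Binomial$(K,t)$ distribution, so they are nonnegative and satisfy $\sum_{k=0}^K b_k^K(t)=1$. Using this partition-of-unity identity I can write $f(t)=\sum_k f(t)\,b_k^K(t)$ and therefore bound
\[
|p_K(t)-f(t)|\le \sum_{k=0}^K \left|f\!\left(\tfrac{k}{K}\right)-f(t)\right| b_k^K(t).
\]

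First I would invoke continuity of $f$ on the compact interval $[0,1]$ to conclude that $f$ is both bounded, say $|f|\le M$, and uniformly continuous: for every $\varepsilon>0$ there is a $\delta>0$ with $|f(s)-f(t)|<\varepsilon$ whenever $|s-t|<\delta$. Next I would split the sum above according to whether $|k/K-t|<\delta$ or $|k/K-t|\ge\delta$. On the ``near'' block, uniform continuity bounds each increment by $\varepsilon$, and since the $b_k^K(t)$ sum to at most $1$, this block contributes at most $\varepsilon$. On the ``far'' block, each increment is bounded crudely by $2M$, so that block is at most $2M\sum_{|k/K-t|\ge\delta}b_k^K(t)$.

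The technical heart of the argument is controlling this tail mass, and this is where I would use a second-moment identity. By a Chebyshev-type estimate,
\[
\sum_{|k/K-t|\ge\delta} b_k^K(t)\le \frac{1}{\delta^2}\sum_{k=0}^K \left(\tfrac{k}{K}-t\right)^2 b_k^K(t)=\frac{t(1-t)}{K\delta^2}\le \frac{1}{4K\delta^2},
\]
using $t(1-t)\le 1/4$ on $[0,1]$. Combining the two blocks yields $|p_K(t)-f(t)|\le \varepsilon + \frac{M}{2K\delta^2}$; letting $K\to\infty$ kills the second term, and since $\varepsilon>0$ was arbitrary we obtain $p_K(t)\to f(t)$, with the bound uniform in $t$. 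The main obstacle I anticipate is not conceptual but computational, namely establishing the variance identity $\sum_k (k/K-t)^2 b_k^K(t)=t(1-t)/K$. I would derive this by differentiating the binomial identity $\sum_k \binom{K}{k}t^k(1-t)^{K-k}=1$ once and twice with respect to $t$ (equivalently, by computing the mean and variance of a Binomial$(K,t)$ variable), which supplies the first- and second-moment identities that drive the entire estimate.
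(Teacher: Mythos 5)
Your proof is correct and complete: it is the classical Bernstein argument for the Weierstrass approximation theorem (partition of unity, uniform continuity on the compact interval, and a Chebyshev-type tail bound via the binomial variance identity), and every step, including the estimate $\sum_{|k/K-t|\ge\delta} b_k^K(t)\le t(1-t)/(K\delta^2)\le 1/(4K\delta^2)$, checks out. The paper itself offers no proof of this lemma --- it simply cites \cite{farouki2012bernstein} --- so your argument supplies exactly the standard derivation that the citation stands in for, and in fact proves the stronger statement of uniform convergence.
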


For the filter function $h: [0, 2]\mapsto [0, 1]$, we let $t=\frac{\lambda}{2}$ and $f(t)=h(2t)$, so that the Bernstein polynomial approximation becomes applicable, where $\theta_k = f(k/K) = h(2k/K)$ and $b_k^K(t)=b_k^K(\frac{\lambda}{2})=\binom{K}{k}(1-\frac{\lambda}{2})^{K-k}(\frac{\lambda}{2})^k$ for $k=1,...,K$.
Consequently, we can approximate $h(\lambda)$ by $p_K(\lambda/2) =\sum\nolimits_{k=0}^{K}\theta_k\binom{K}{k}(1-\frac{\lambda}{2})^{K-k}\left(\frac{\lambda}{2}\right)^{k}=\sum\nolimits_{k=0}^{K}\theta_k\frac{1}{2^K}\binom{K}{k}(2-\lambda)^{K-k}\lambda^{k}$, and Lemma~\ref{lemma_bern} ensures that $p_K(\lambda/2) \rightarrow h(\lambda)$ as $K \rightarrow \infty$.

Replacing $\{h(\lambda_i)\}_{i=1}^{n}$ with $\{p_K(\lambda_i/2)\}_{i=1}^{n}$, we approximate the spectral filter $h(\mathbf{L})$ in Equation~\eqref{eq:eigen} as $\mathbf{U}  \textit{diag}[p_K(\lambda_1/2),...,p_K(\lambda_n/2)]\mathbf{U}^T$ and derive the proposed BernNet. 
In particular, given a graph signal $\mathbf{x}$, the convolutional operator of our BernNet is defined as follows: 
\begin{equation}
    \begin{aligned}
    \label{eqn:bernnet}
        \mathbf{z}=\underbrace{\mathbf{U}  \textit{diag}[p_K(\lambda_1/2),...,p_K(\lambda_n/2)] \mathbf{U}^T}_{\text{BernNet}}\mathbf{x}=\sum\limits_{k=0}^{K}\theta_k \frac{1}{2^K}\binom{K}{k} (2\mathbf{I}-\mathbf{L})^{K-k}\mathbf{L}^{k}\mathbf{x}
    \end{aligned}
\end{equation}
where each coefficient $\theta_k$ can be either set to $h(2k/K)$ to approximate a predetermined filter $h$, or learnt from the graph structure and signal in an end-to-end fashion. 
As a natural extension of Lemma~\ref{lemma_bern}, our BernNet owns the following proposition.
\begin{proposition}
\label{thm:main}
For an arbitrary continuous filter function $h: [0,2] \rightarrow [0,1]$, by setting $\theta_k= h(2k/K), k=0,\ldots, K$, the $\mathbf{z}$ in Equation~\eqref{eqn:bernnet} satisfies $\mathbf{z}  \rightarrow h(\mathbf{L})\mathbf{x}$ as $K \rightarrow \infty$.
\end{proposition}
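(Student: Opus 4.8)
The plan is to reduce the convergence of the vector $\mathbf{z}$ to the pointwise convergence of Bernstein polynomials guaranteed by Lemma~\ref{lemma_bern}, exploiting the fact that the spectrum of $\mathbf{L}$ consists of only finitely many points. First I would write the error vector in the spectral basis. Using the eigendecomposition $\mathbf{L}=\mathbf{U}\mathbf{\Lambda}\mathbf{U}^T$ together with the identity in Equation~\eqref{eqn:bernnet}, both $\mathbf{z}$ and $h(\mathbf{L})\mathbf{x}$ are diagonalized by the same $\mathbf{U}$, so that
\[
\mathbf{z}-h(\mathbf{L})\mathbf{x}=\mathbf{U}\,\textit{diag}\big[p_K(\lambda_1/2)-h(\lambda_1),\ldots,p_K(\lambda_n/2)-h(\lambda_n)\big]\,\mathbf{U}^T\mathbf{x}.
\]
This isolates the entire approximation error into the $n$ scalar quantities $p_K(\lambda_i/2)-h(\lambda_i)$ sitting on the diagonal.

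Second, I would verify that the Bernstein lemma applies at each eigenvalue. Because $\mathbf{L}$ is the symmetric normalized Laplacian, its eigenvalues satisfy $\lambda_i\in[0,2]$, hence $\lambda_i/2\in[0,1]$ lies in the domain of the approximation. Setting $f(t)=h(2t)$, which is continuous on $[0,1]$ since $h$ is continuous on $[0,2]$, and taking $\theta_k=f(k/K)=h(2k/K)$ as prescribed by the Proposition, Lemma~\ref{lemma_bern} gives $p_K(t)\to f(t)$; evaluating at $t=\lambda_i/2$ yields $p_K(\lambda_i/2)\to h(\lambda_i)$ for every $i$.

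Third, I would pass from these scalar convergences to convergence of the vector. Since there are only $n$ eigenvalues, each of the $n$ diagonal entries tends to $0$, so the diagonal matrix tends to the zero matrix in any matrix norm (we are in finite dimension, where all norms are equivalent). Multiplying on the left and right by the fixed orthogonal matrices $\mathbf{U},\mathbf{U}^T$ and by the fixed signal $\mathbf{x}$ is a bounded linear operation, so $\mathbf{z}-h(\mathbf{L})\mathbf{x}\to\mathbf{0}$, i.e.\ $\mathbf{z}\to h(\mathbf{L})\mathbf{x}$ as $K\to\infty$.

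I do not expect a genuine analytic obstacle here: the heavy lifting, namely the convergence of the Bernstein polynomials, is already supplied by Lemma~\ref{lemma_bern}. The only points requiring care are in the reduction itself. One is the observation that the finiteness of the spectrum lets us get away with mere pointwise convergence of $p_K$, rather than the uniform convergence one would otherwise need were the spectrum an infinite subset of $[0,2]$. The other is checking that the eigenvalues genuinely lie in $[0,2]$, so that the rescaling $t=\lambda/2$ lands in the unit interval where $f$ is defined and the lemma is valid.
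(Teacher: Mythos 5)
Your proposal is correct and follows essentially the same route as the paper's proof: apply Lemma~\ref{lemma_bern} to $f(t)=h(2t)$ with $\theta_k=h(2k/K)$ to get $p_K(\lambda_i/2)\to h(\lambda_i)$ at each eigenvalue, then pass through the eigendecomposition to conclude $\mathbf{z}\to h(\mathbf{L})\mathbf{x}$. Your version is merely more explicit about the points the paper leaves implicit (eigenvalues lying in $[0,2]$, pointwise convergence sufficing on a finite spectrum, and boundedness of the conjugation by $\mathbf{U}$).
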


\begin{proof}
According to the above derivation, we have
$p_K(\lambda/2) =\sum\nolimits_{k=0}^{K}\theta_k\binom{K}{k}(1-\frac{\lambda}{2})^{K-k}\left(\frac{\lambda}{2}\right)^{k}=\sum\nolimits_{k=0}^{K}\theta_k\frac{1}{2^K}\binom{K}{k}(2-\lambda)^{K-k}\lambda^{k}$, and Lemma~\ref{lemma_bern} ensures that $p_K(\lambda/2) \rightarrow h(\lambda)$ as $\theta_k= h(2k/K)$ and $K \rightarrow \infty$.

Consequently, we have
\begin{equation*}
    \mathbf{z}=\mathbf{U}\textit{diag}[p_K(\lambda_1/2),...,p_K(\lambda_n/2)] \mathbf{U}^T \mathbf{x}\rightarrow \mathbf{U}  \textit{diag}[h(\lambda_1),...,h(\lambda_n)]  \mathbf{U}^T  \mathbf{x}=h(\mathbf{L}) 
\end{equation*}
as $\theta_k= h(2k/K)$ and $K \rightarrow \infty$.

\end{proof}

\subsection{Realizing existing filters with BernNet.}
As shown in Proposition~\ref{thm:main}, our BernNet can approximate arbitrary continuous spectral filters with sufficient precision. 
Below we give some representative examples of how our BernNet exactly realizes existing filters that are commonly used in GNNs.
\begin{itemize}
    \item {\bf All-pass filter $h(\lambda) = 1$.} 
    We set $\theta_k=1$ for $k=0,\ldots, K$, and the approximation $p_K(\frac{\lambda}{2})=1$ is exactly the same with $h(\lambda)$.
    Accordingly, our BernNet becomes an identity matrix, which realizes the all-pass filter perfectly.
     
    \item {\bf Linear low-pass filter $h(\lambda) =  1-\lambda/2$.}  
    \
    We set $\theta_k=1-k/K$ for $k=0,\ldots, K$ and obtain $p_K(\frac{\lambda}{2})=1-\lambda/2$. 
    The BernNet becomes
    $ \sum\nolimits_{k=0}^{K} \frac{(K-k)}{K} \frac{1}{2^K}\binom{K}{k}(2\mathbf{I}-\mathbf{L})^{K-k}\mathbf{L}^{k}  = \mathbf{I} -\frac{1}{2}\mathbf{L}$, which achieves the linear low-pass filter exactly. 
    Note that $\mathbf{I} -\frac{1}{2}\mathbf{L} = \frac{1}{2}(\mathbf{I} + \mathbf{P})$ is also the same as the graph convolutional network (GCN) before renormalization~\cite{kipf2016semi}.
    
    \item {\bf Linear high-pass filter $h(\lambda) = \lambda/2$.} 
    Similarly, we can set $\theta_k= k/K$ for $k=0,\ldots,K$ to get a perfect approximation $p_K(\frac{\lambda}{2})=\frac{\lambda}{2}$, and the BernNet becomes $\frac{1}{2}\mathbf{L}$.
\end{itemize}

Note that even for those non-continuous spectral filters, $e.g.$, the impulse low/high/band-pass filters, our BernNet can also provide good approximations (with sufficient large $K$).
\begin{itemize}
    \item {\bf Impulse low-pass filter $h(\lambda) = \delta_0(\lambda)$.}\footnote[2]{The impulse function $\delta_x(\lambda)=1$ if $\lambda=x$, otherwise $\delta_x(\lambda)=0$}
    We set $\theta_0=1$ and $\theta_k=0$ for $k\neq 0$, and $p_K(\frac{\lambda}{2}) = (1-\frac{\lambda}{2})^K$.
    Accordingly, the BernNet becomes $\frac{1}{2^K}(2\mathbf{I}-\mathbf{L})^{K}$, deriving an $K$-layer linear low-pass filter.
    \item {\bf Impulse high-pass filter $h(\lambda) = \delta_2(\lambda)$.} 
    We set $\theta_K=1$ and $\theta_k=0$ for $k\neq K$, and $p_K(\frac{\lambda}{2}) = (\frac{\lambda}{2})^K$. 
    The BernNet becomes $\frac{1}{2^K}\mathbf{L}^{K}$, $i.e.$, an $K$-layer linear high-pass filter.
    \item {\bf Impulse band-pass filter $h(\lambda) = \delta_1(\lambda)$.} 
    Similarly, we set $\theta_{K/2}=1$ and $\theta_k=0$ for $k\neq K/2$, and $p_K(\frac{\lambda}{2})=\binom{K}{K/2}(1-\lambda/2)^{K/2}(\lambda/2)^{K/2}$.
    The BernNet becomes $\frac{1}{2^K}\binom{K}{K/2}(2\mathbf{I}-\mathbf{L})^{K/2}\mathbf{L}^{K/2}$, which can be explained as stacking a $K/2$-layer linear low-pass filter and a $K/2$-layer linear high-pass filter.
    Obviously, $K$ should be an even number in this case.
\end{itemize}

Table~\ref{tab:filters} summarizes the design of the BernNet for the filters above. 
We can find that an appealing advantage of our BernNet is that its coefficients are highly correlated with the spectral property of the target filter. 
In particular, we can determine to pass or reject the spectral signal with $\lambda\approx \frac{2k}{K}$ by using a large or small $\theta_k$ because each Bernstein base $b_k^K(\lambda)$ corresponds to a ``bump'' located at $\frac{2k}{K}$. 
This property provides useful guidance when designing filters, which enhances the interpretability of our BernNet. 

\begin{table}[t]
\caption{Realizing commonly used filters with BernNet.}\label{tab:filters}
\resizebox{\textwidth}{!}{
\begin{tabular}{@{}lllll@{}}
\toprule
Filter types
&Filter $h(\lambda)$  
&$\theta_k$ for $k=0,\ldots, K$ 
&Bernstein approximation $p_K(\frac{\lambda}{2})$ 
&BernNet
\\ 
\midrule
All-pass          
&1
&$\theta_k=1$  
&1 
&$\mathbf{I}$
\\
Linear low-pass   
&$1-\lambda/2$  
&$\theta_k=1-k/K$  
&$1-\lambda/2$   
&$\mathbf{I} -\frac{1}{2}\mathbf{L}$
\\
Linear high-pass  
&$\lambda/2$  
&$\theta_k= k/K$  
&$\lambda/2$  
&$\frac{1}{2}\mathbf{L}$
\\
Impulse low-pass  
&$\delta_0(\lambda)$
&$\theta_0=1$ and other $\theta_k=0$  
&$(1-\lambda/2)^K$   
&$\frac{1}{2^K}(2\mathbf{I}-\mathbf{L})^{K}$
\\
Impulse high-pass 
&$\delta_2(\lambda)$
&$\theta_K=1$ and other $\theta_k=0$  
&$(\lambda/2)^K$   
&$\frac{1}{2^K}\mathbf{L}^{K}$
\\
Impulse band-pass 
&$\delta_1(\lambda)$ 
&$\theta_{K/2}=1$ and other $\theta_k=0$  
&$\binom{K}{K/2}(1-\lambda/2)^{K/2}(\lambda/2)^{K/2}$ 
&$\frac{1}{2^K}\binom{K}{K/2}(2\mathbf{I}-\mathbf{L})^{K/2}\mathbf{L}^{K/2}$
\\ 
\bottomrule
\end{tabular}}
\end{table}

\subsection{Learning complex filters with BernNet}
\label{le:comp_filter}
Besides designing the above typical filters, our BernNet can express more complex filters, such as band-pass, band-rejection, comb, low-band-pass filters, $etc$.
Moreover, given the graph signals before and after applying such filters ($i.e.$, the $\mathbf{x}$'s and the corresponding $\mathbf{z}$'s), our BernNet can learn their approximations in an end-to-end manner. 
Specifically, given the pairs $\{\mathbf{x},\mathbf{z}\}$, we learn the coefficients $\{\theta_k\}_{k=0}^{K}$ of the BernNet by gradient descent. 
More implementation details can be found at the experimental section below. 
Figure~\ref{fig:filters_bern} illustrates the four complex filters and the approximations we learned (The low-band pass filter is $h(\lambda)=I_{[0,0.5]}(\lambda)+\exp{(-100(\lambda-0.5)^2)}I_{(0.5,1)}(\lambda)+\exp{(-50(\lambda-1.5)^2)}I_{[1,2]}(\lambda)$, where $I_{\Omega}(\lambda)=1$ when $\lambda\in\Omega$, otherwise $I_{\Omega}(\lambda)=0$).
In general, our BernNet can learn a smoothed approximation of these complex filters, and the approximation precision improves with the increase of the order $K$. 
Note that although the BernNet cannot pinpoint the exact peaks of the comb filter or drop to 0 for the valleys of comb or low-band-pass filters due to the limitation of $K$, it still significantly outperforms other GNNs for learning such complex filters.

\begin{figure}[t]
    \centering
   \subfigure[Band-pass]{
   \includegraphics[width=33mm]{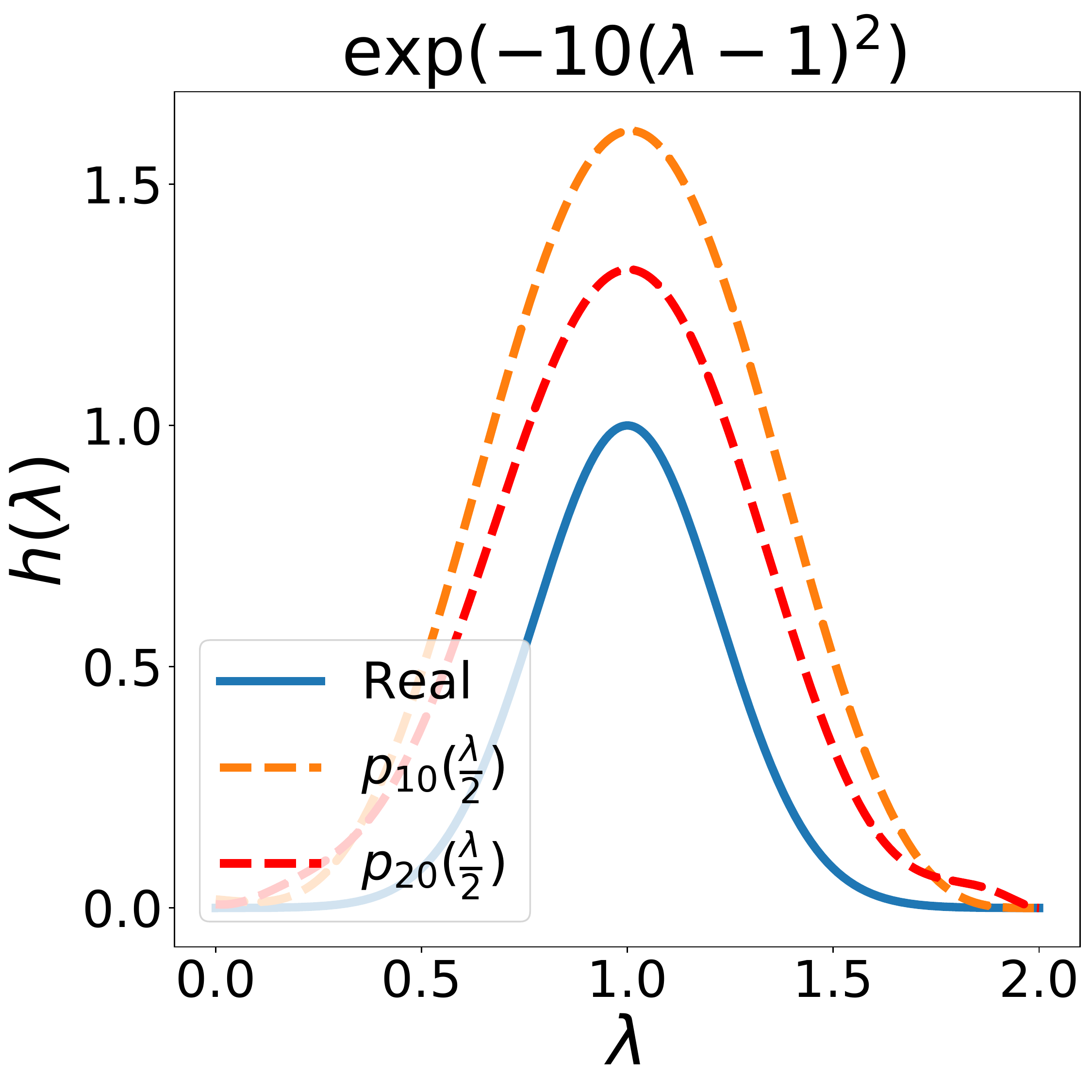}
   }
   \hspace{-2mm}
   \subfigure[Band-rejection]{
   \includegraphics[width=33mm]{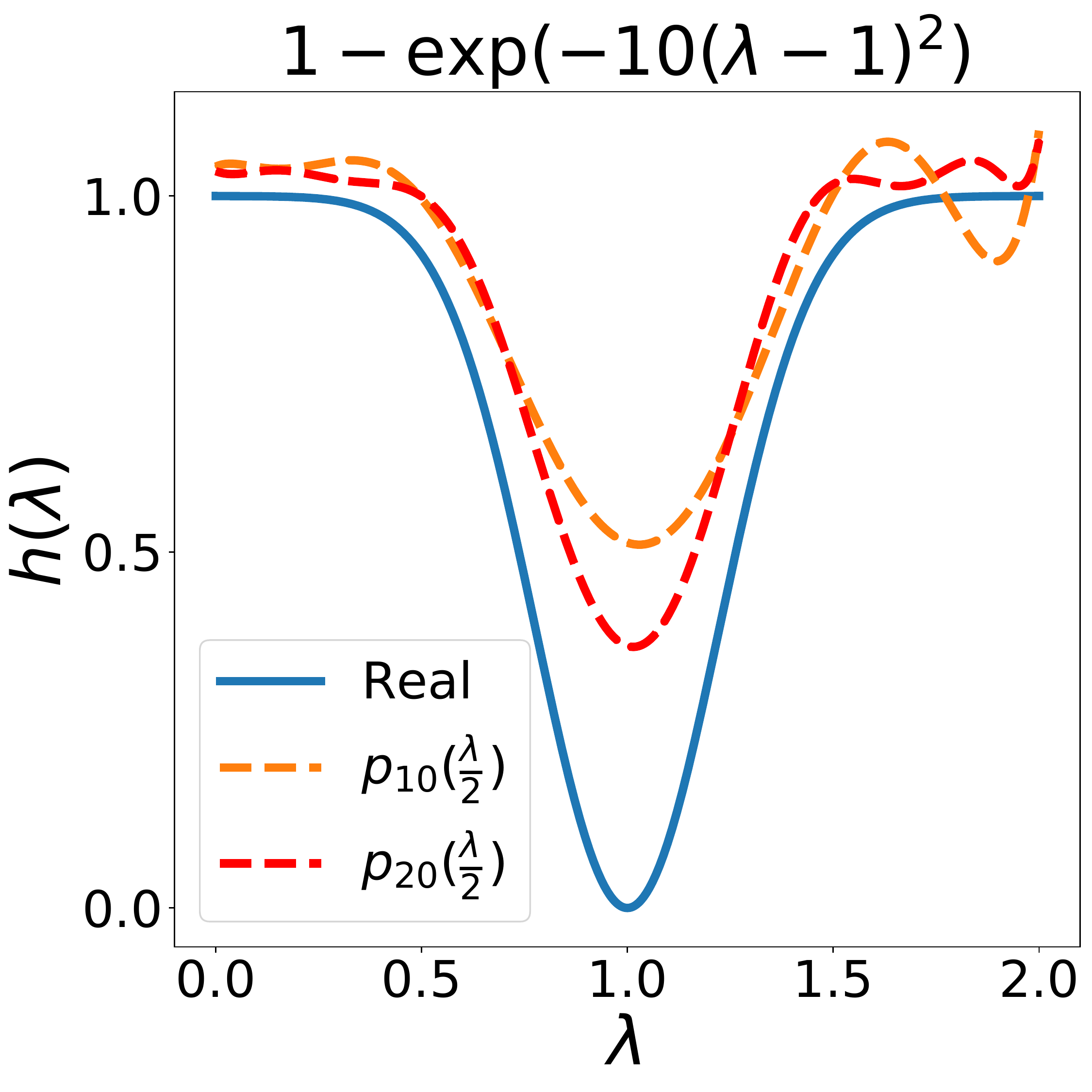}
   }
   \hspace{-2mm}
   \subfigure[Comb]{
   \includegraphics[width=33mm]{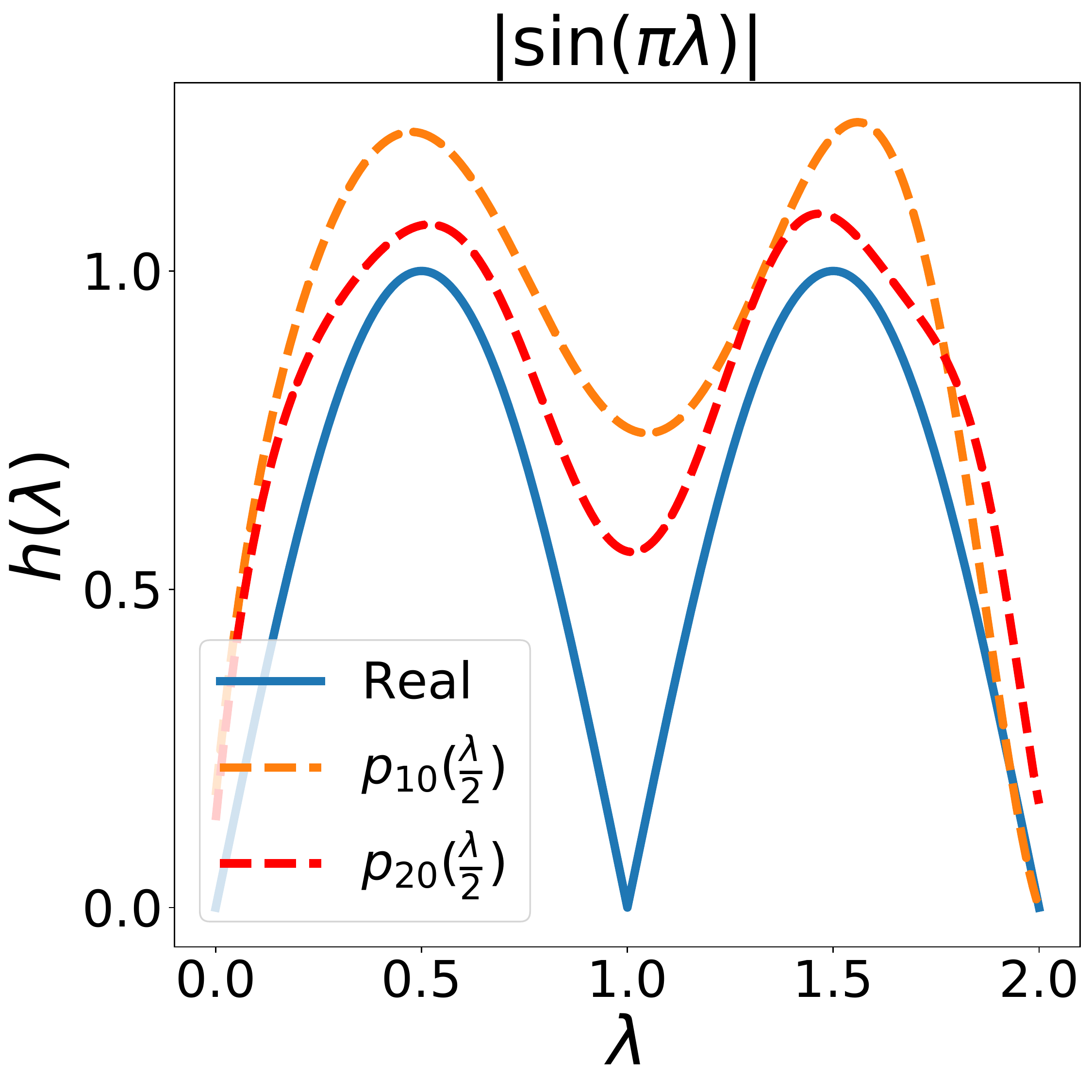}
   }
   \hspace{-2mm}
   \subfigure[Low-band-pass]{
   \includegraphics[width=33mm]{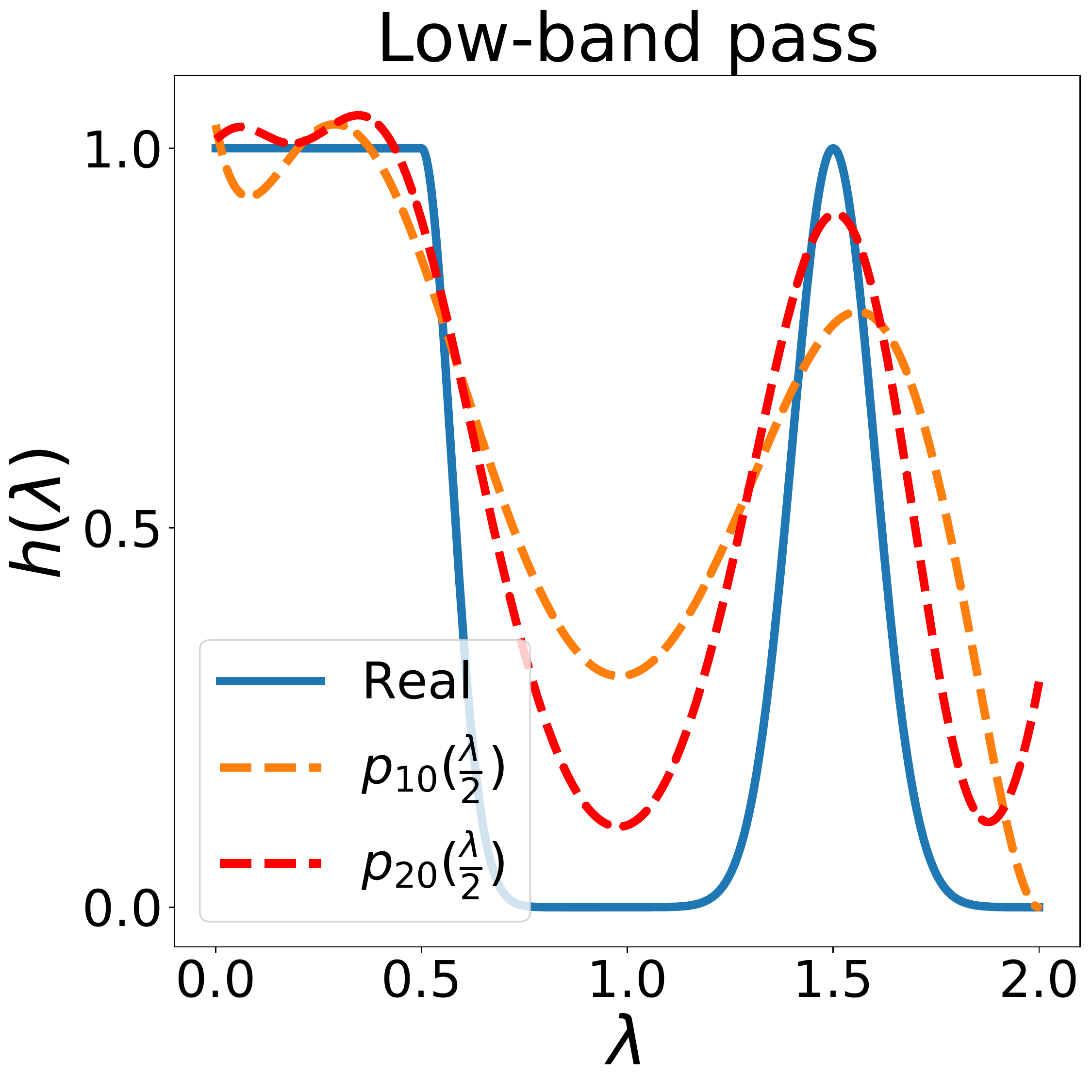}
   }
   \vspace{-3mm}
    \caption{Illustrations of four complex filters and their approximations learnt by BernNet.}
    \label{fig:filters_bern}
\end{figure}

\section{BernNet in the Lens of Graph Optimization}
In this section, we motivate BernNet from the perspective of graph optimization. In particular, we show that any polynomial filter that attempts to approximate a valid filter has to take the form of BernNet. 
\subsection{A generalized graph optimization problem}
Given a $n$-dimensional graph signal $\mathbf{x}$, we consider a generalized graph optimization problem
\begin{equation}\label{energyfunction}
\begin{aligned}
    \min_{\mathbf{z}} f(\mathbf{z})=(1-\alpha)\mathbf{z}^T \gamma (\mathbf{L}) \mathbf{z}+\alpha\| \mathbf{z}-\mathbf{x}\|^2_2 
\end{aligned}
\end{equation}
where $\alpha \in [0,1)$ is a trade-off parameter, $\mathbf{z} \in R^{n}$ denotes the propagated representation of the input graph signal $\mathbf{x}$, and $\gamma (\mathbf{L})$ denotes an energy function of $\mathbf{L}$, determining the rate of propagation~\cite{spielman2019spectral}. 
Generally, $\gamma (\cdot)$ operates on the spectral of $\mathbf{L}$, and we have $\gamma (\mathbf{L}) = \mathbf{U} \textit{diag}[\gamma (\lambda_1),...,\gamma (\lambda_n)] \mathbf{U}^T$.

We can model the polynomial filtering operation of existing GNNs with the optimal solution of Equation~\eqref{energyfunction}. For example, if we set $\gamma (\mathbf{L}) = \mathbf{L}$, then the optimization function~\eqref{energyfunction} becomes $f(\mathbf{z})=(1-\alpha)\mathbf{z}^T\mathbf{L}\mathbf{z}+\alpha\|\mathbf{z}-\mathbf{x}\|^2_2 $, a well-known convex graph optimization function proposed by Zhou et al.~\cite{zhouLearing}.  $f(\mathbf{z})$ takes the minimum when the derivative $\frac{\partial f(\mathbf{z})}{\partial \mathbf{z}} =     2(1-\alpha)\mathbf{L} \mathbf{z} + 2\alpha\left( \mathbf{z} - \mathbf{x}\right) = \mathbf{0}$,
which solves to 
\begin{equation*}
    \mathbf{z}^\ast = \alpha \left(\mathbf{I}-(1-\alpha)(\mathbf{I} -\mathbf{L})\right)^{-1}\mathbf{x} = \sum_{k=0}^\infty \alpha(1-\alpha)^k \left(\mathbf{I} - \mathbf{L}\right)^k \mathbf{x} = \sum_{k=0}^\infty \alpha(1-\alpha)^k \mathbf{P}^k \mathbf{x}. 
\end{equation*}
By taking a suffix sum $\sum_{k=0}^K \alpha(1-\alpha)^k \mathbf{P}^k \mathbf{x}$, we obtain the polynomial filtering operation for APPNP~\cite{appnp}. Zhu et al.~\cite{zhu2021interpreting} further show that GCN~\cite{kipf2016semi}, DAGNN~\cite{liu2020DAGNN}, and JKNet~\cite{xu2018jknet} can be interpreted by the optimization function~\eqref{energyfunction} with $\gamma (\mathbf{L}) = \mathbf{L}$.


The generalized form of Equation~\eqref{energyfunction} allows us to simulate more complex polynomial filtering operation. For example, let $\alpha = 0.5$ and $\gamma (\mathbf{L})=e^{t\mathbf{L}}-\mathbf{I}$, a heat kernel with $t$ as the temperature parameter. Then $f(\mathbf{z})$ takes the minimum when the derivative $\frac{\partial f(\mathbf{z})}{\partial \mathbf{z}} =     \left(e^{t\mathbf{L}}-\mathbf{I}\right) \mathbf{z} +  \mathbf{z} - \mathbf{x} = \mathbf{0}$,
which solves to 
\begin{equation*}
    \mathbf{z}^\ast= e^{-t\mathbf{L}}\mathbf{x} = e^{-t\left(\mathbf{I}-\mathbf{P}\right)}\mathbf{x}  = \sum_{k=0}^\infty e^{-t}\frac{t^k}{k!} \mathbf{P}^k \mathbf{x}. 
\end{equation*}
By taking a suffix sum $\sum_{k=0}^K e^{-t}\frac{t^k}{k!} \mathbf{P}^k \mathbf{x}$, we obtain the polynomial filtering operation for the heat kernal based GNN such as GDC~\cite{klicpera2019diffusion} and GraphHeat ~\cite{xu2020graphheat}.

\subsection{Non-negative constraint on polynomial filters}
A natural question is that, does an arbitrary energy function $\gamma (\mathbf{L})$ correspond to a valid or ill-posed spectral filter? Conversely, does any polynomial filtering operation $\sum_{k=0}^K w_k \mathbf{L}^k \mathbf{x}$ correspond to the optimal solution of the optimization function~\eqref{energyfunction} for some energy function $\gamma (\mathbf{L})$? 

As it turns out, there is a ``minimum requirement'' for the energy function $\gamma (\mathbf{L})$;  $\gamma (\mathbf{L})$ has to be {\bf positive semidefinite}. In particular, if $\gamma (\mathbf{L})$ is not positive semidefinite, then the optimization function $f(\mathbf{z})$ is not convex, and  the solution to
$\frac{\partial f(\mathbf{z})}{\partial \mathbf{z}} =   0$ may corresponds to a saddle point. 
Furthermore, without the positive semidefinite constraint on $\gamma (\mathbf{L})$, $f(\mathbf{z})$ may goes to $-\infty$ as we set $\mathbf{z}$ to be a multiple of the eigenvector corresponding to the negative eigenvalue. 

\textbf{Non-negative polynomial filters.}
Given a positive semidefinite energy function $\gamma (\mathbf{L})$, we now consider how the corresponding polynomial filtering operation $\sum_{k=0}^K w_k \mathbf{L}^k \mathbf{x}$ should look like. 
Recall that we assume $\gamma (\mathbf{L}) = \mathbf{U} \textit{diag}[\gamma (\lambda_1),...,\gamma (\lambda_n)] \mathbf{U}^T$. By the positive semidefinite constraint, we have $\gamma (\lambda) \ge 0$ for $\lambda \in [0,2]$. Since the objective function $f(\mathbf{z})$ is convex, it takes the minimum when $\frac{\partial f(\mathbf{z})}{\partial \mathbf{z}} =     2(1-\alpha)\gamma (\mathbf{L}) \mathbf{z} + 2\alpha\left( \mathbf{z} - \mathbf{x}\right) = \mathbf{0}$. Accordingly, the optimum $\bm{z}^\ast$ can be derived as 
\begin{equation}\label{optimal_res}
    \begin{aligned}
        \alpha\left(\alpha\mathbf{I}+(1-\alpha) \gamma (\mathbf{L})\right)^{-1}\mathbf{x}  = \mathbf{U}  \textit{diag}\left[\frac{\alpha}{\alpha+(1-\alpha)\gamma (\lambda_1)},...,\frac{\alpha}{\alpha+(1-\alpha)\gamma (\lambda_n)}\right]  \mathbf{U}^T\mathbf{x}.
    \end{aligned}
\end{equation}
Let  $h(\lambda)=\frac{\alpha}{\alpha+(1-\alpha)\gamma (\lambda)}$ denote the exact spectral filter, and  $g(\lambda)= \sum\nolimits_{k=0}^{K}w_k\lambda^k$ denote a polynomial approximation  of $h(\lambda)$ (e.g. the suffix sum of $h(\lambda)$'s taylor expansion). Since $\gamma (\lambda) \ge 0$ when $\lambda \in [0,2]$, we have $ 0 \le h(\lambda) \le \frac{\alpha}{\alpha+(1-\alpha)\cdot 0} = 1$ for $\lambda \in [0,2]$. Consequently, it is natural to assume the polynomial filter $g(\lambda)= \sum\nolimits_{k=0}^{K}w_k\lambda^k$ also satisfies $0 \le g(\lambda) \le 1$.
\begin{constraint}
\label{constraint}
Assuming the  energy function $\gamma (\mathbf{L})$ is positive semidefinite, a polynomial filter $g(\lambda)=\sum\nolimits_{k=0}^{K}w_k\lambda^k$ approximating the optimal solution to~Equation~\eqref{energyfunction} has to satisfy 
\begin{equation}
\label{eqn:constraint}
    0 \le g(\lambda)= \sum\nolimits_{k=0}^{K}w_k\lambda^k\le 1,~\forall~\lambda \in [0,2].
\end{equation}
\end{constraint}


While Constraint~\ref{constraint} seems to be simple and intuitive, some of the existing GNN may not satisfies this constraint. For example, GCN uses $\mathbf{z} = \mathbf{P}\mathbf{x} = \left(\mathbf{I} - \mathbf{L}\right)\mathbf{x}$, which corresponds to a polynomial filter $g(\lambda)=1-\lambda$ that takes negative value  when $\lambda> 1$, violating Constraint~\ref{constraint}. As shown in~\cite{wu2019sgc}, the renormalization trick $\Tilde{\mathbf{P}} = \left(\mathbf{I}+\mathbf{D}\right)^{-1/2}\left(\mathbf{I}+\mathbf{A}\right) \left(\mathbf{I}+\mathbf{D}\right)^{-1/2}$ shrinks the spectral and thus reliefs the problem. However, $g(\lambda)$ may still take negative value as the maximum eigenvalue of $\Tilde{\mathbf{L}} =\mathbf{I} - \Tilde{\mathbf{P}}$ is still larger than $1$. 

\subsection{Non-negative polynomials and Bernstein basis}
Constraint~\ref{constraint} motivates us to design polynomial filters $ g(\lambda)= \sum\nolimits_{k=0}^{K}w_k\lambda^k$ such that $ 0 \le g(\lambda)\le 1$ when $\lambda \in [0,2].$ The $g(\lambda)\le 1$ part is trivial, as we can always rescale each $w_k$ by a factor of $\sum\nolimits_{k=0}^{K}|w_k|2^k$. The  $ g(\lambda) \ge 0$ part, however, requires more elaboration. Note that we can not simply set $w_k \ge 0$ for each $k=0\ldots, K$, since it is shown in~\cite{chien2021GPR-GNN} that such polynomials only correspond to  low-pass filters. 

As it turns out, the Bernstein basis has the following nice property: a polynomial that is non-negative on a certain interval can always be expressed as a non-negative linear combination of Bernstein basis. Specifically, we have the following lemma.

\begin{lemma}[\cite{powers2000polynomials}]
\label{lem:non-negative}
Assume a polynomial $p(x)= \sum\nolimits_{k=0}^{K}\theta_k x^k$ satisfies $p(x)\ge 0$ for $x \in [0,1]$. 
Then there exists a sequence of non-negative coefficients $\theta_k$, $k=0,\ldots, K$, such that 
\begin{equation*}
    p(x) = \sum\limits_{k=0}^{K}\theta_k b_k^K(x)=\sum\limits_{k=0}^{K}\theta_k \binom{K}{k}(1-x)^{K-k}x^k
\end{equation*}
\end{lemma}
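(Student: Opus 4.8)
The plan is to prove the statement by strong induction on the degree $K$, exploiting the multiplicative identities that link the Bernstein bases of consecutive degrees. The base case $K=0$ is immediate, since a non-negative constant is its own non-negative Bernstein coefficient. For the inductive step I would use the fact that a polynomial $p$ with $p(x)\ge 0$ on $[0,1]$ factors according to the location of its roots: every root in the open interval $(0,1)$ must have even multiplicity, since otherwise $p$ would change sign, while roots at the endpoints $0$ and $1$ may have arbitrary multiplicity and complex roots occur in conjugate pairs contributing quadratics that are positive on all of $\mathbb{R}$. This lets me peel off factors one at a time and carry a non-negative Bernstein expansion along the factorization.

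First I would dispose of the endpoint roots. If $p(0)=0$, write $p(x)=x\,q(x)$ with $\deg q=K-1$ and $q\ge 0$ on $[0,1]$; by the induction hypothesis $q=\sum_{k=0}^{K-1}\theta_k' b_k^{K-1}(x)$ with every $\theta_k'\ge 0$. The identity $x\,b_k^{K-1}(x)=\frac{k+1}{K}\,b_{k+1}^{K}(x)$ then gives a degree-$K$ Bernstein expansion of $p$ whose coefficients $\frac{k+1}{K}\theta_k'$ are again non-negative. The symmetric identity $(1-x)\,b_k^{K-1}(x)=\frac{K-k}{K}\,b_k^{K}(x)$ handles a root at $x=1$ in exactly the same way. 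Iterating, I can strip all boundary factors while preserving non-negativity of the coefficients, reducing to the case where $p$ is strictly positive at both endpoints and all its surviving roots lie in the interior.

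The remaining and decisive case is to absorb an interior factor that is non-negative on $[0,1]$ but indecomposable over the reals --- a squared linear factor $(x-r)^2$ with $r\in(0,1)$ or an irreducible positive quadratic --- into a polynomial that already admits a non-negative degree-$(K-2)$ Bernstein expansion. This is the step I expect to be the main obstacle. Unlike the boundary factors $x$ and $1-x$, such a quadratic does not itself have non-negative coefficients in the Bernstein basis: for instance $(x-\tfrac12)^2$ has a negative middle coefficient in degree two, so term-by-term multiplication can introduce negative coefficients. The partition-of-unity property $\sum_{k=0}^{K}b_k^{K}(x)=1$ with $b_k^K\ge 0$ on $[0,1]$ shows only that $p(x)$ is a convex combination of the sought $\theta_k$, which makes $p\ge 0$ necessary but does not by itself control the sign of the interior coefficients. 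The heart of the proof is therefore a quantitative estimate showing that, once the endpoint multiplicities have been extracted, the strict positivity of the surviving factor leaves enough margin to keep every coefficient non-negative; making this margin explicit, and verifying that the \emph{given} order $K$ already suffices rather than some larger order, is precisely the delicate point where I would concentrate the effort.
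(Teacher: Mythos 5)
Your endpoint reduction is sound---the identities $x\,b_k^{K-1}(x)=\frac{k+1}{K}b_{k+1}^{K}(x)$ and $(1-x)\,b_k^{K-1}(x)=\frac{K-k}{K}b_k^{K}(x)$ are correct, and products of non-negative Bernstein expansions stay non-negative since $b_i^{m}b_j^{n}$ is a positive multiple of $b_{i+j}^{m+n}$. But the step you flag as ``the main obstacle'' is not merely delicate: it is impossible, because the lemma as literally stated is false. Take $p(x)=(x-\tfrac12)^2$, which is non-negative on $[0,1]$ with an interior zero. Every Bernstein basis polynomial satisfies $b_k^{N}(\tfrac12)=\binom{N}{k}2^{-N}>0$, so if $p=\sum_{k}\theta_k b_k^{N}$ with all $\theta_k\ge 0$, evaluating at $x=\tfrac12$ forces every $\theta_k=0$ and hence $p\equiv 0$, a contradiction. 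Thus a polynomial with a zero in the open interval admits \emph{no} non-negative Bernstein representation of any degree, let alone degree $K$; no quantitative margin can absorb an interior squared factor. Your second worry is equally fatal: even for strictly positive polynomials the given order $K$ need not suffice. For $p(x)=(x-\tfrac12)^2+\epsilon$ with $0<\epsilon<\tfrac14$, the degree-$2$ Bernstein coefficients are $p(0)$, $\epsilon-\tfrac14<0$, and $p(1)$, so a non-negative representation can only exist in some strictly larger degree.

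For comparison, the paper does not prove this lemma at all; it imports it from Powers and Reznick~\cite{powers2000polynomials}. What that reference (going back to Bernstein's 1915 theorem, with effective degree bounds) actually establishes is: if $p>0$ on the closed interval, then for all sufficiently large $N$---depending on $\min_{[0,1]}p$, and in general strictly larger than $\deg p$---the coefficients of $p$ in the degree-$N$ Bernstein basis are non-negative. So the two caveats you isolated, interior zeros and whether order $K$ suffices, are precisely the two points at which the paper's phrasing overstates the cited theorem. A correct plan is to prove the statement under the hypothesis $p>0$ with the Bernstein order allowed to exceed $K$ (e.g.\ via the fact that under degree elevation the Bernstein coefficients of a fixed polynomial converge uniformly to its sampled values, so strict positivity eventually forces them all positive); your induction with the endpoint identities then extends this to polynomials whose only zeros lie at $0$ or $1$, while polynomials with interior zeros must simply be excluded from the claim.
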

Lemma~\ref{lem:non-negative} suggests that to approximate a valid filter, the polynomial filter $g(\lambda)$ has to be a non-negative linear combination of Bernstein basis. 
Specifically, by setting $x=\lambda/2$, the filter $g(\lambda)$ that satisfies $g(\lambda)\ge 0$ for $\lambda \in [0,2]$ can be expressed as 
\begin{equation*}
    g(\lambda):=p\left(\frac{\lambda}{2}\right) = \sum\limits_{k=0}^{K}\theta_k \frac{1}{2^K}\binom{K}{k}(2-\lambda)^{K-k}\lambda^k.
\end{equation*}
Consequently, any valid polynomial filter that approximate the optimal solution of~\eqref{energyfunction} with positive semidefinite energy function $\gamma (\mathbf{L})$ has to take the following form: $\mathbf{z} =\sum_{k=0}^{K}\theta_k \frac{1}{2^K}\binom{K}{k}(2\mathbf{I}-\mathbf{L})^{K-k}\mathbf{L}^k \mathbf{x}$.
This observation motivates our BernNet from the perspective of graph optimization --- any valid polynomial filers, $i.e.$, the $g:[0,2] \mapsto [0,1]$, can always be expressed by BernNet, and accordingly, the filters learned by our BernNet are always valid.


\section{Related Work}
Graph neural networks (GNNs) can be broadly divided into spectral-based GNNs and spatial-based GNNs~\cite{wu2020comprehensive}. 

Spectral-based GNNs design spectral graph filters in the spectral domain. 
ChebNet~\cite{Chebnet} uses Chebyshev polynomial to approximate a filter. 
GCN~\cite{kipf2016semi} simplifies the Chebyshev filter with the first-order approximation. 
GraphHeat~\cite{xu2020graphheat} uses heat kernel to design a graph filter. 
APPNP~\cite{appnp} utilizes Personalized PageRank (PPR) to set the filter weights.  
GPR-GNN~\cite{chien2021GPR-GNN}  learns the polynomial filters via gradient descent on the polynomial coefficients. 
ARMA~\cite{bianchi2021ARMA} learns a rational filter via the family of Auto-Regressive Moving Average filters~\cite{narang2013signal}. 
AdaGNN~\cite{adagnn} learns simple filters across multiple layers with a single parameter for each feature channel at each layer.
As aforementioned, these methods mainly focus on designing low- or high-pass filters or learning filters without any constraints, which may lead to misspecified even ill-posed filters.

On the other hand, spatial-based GNNs directly propagate and aggregate graph information in the spatial domain. 
From this perspective, GCN~\cite{kipf2016semi} can be explained as the aggregation of the one-hop neighbor information on the graph. 
GAT~\cite{gat} uses the attention mechanism to learn aggregation weights. 
Recently, Balcilar et al.~\cite{balcilar2021analyzing} bridge the gap between spectral-based and spatial-based GNNs and unify GNNs in the same framework. 
Their work shows that the GNNs can be interpreted as sophisticated data-driven filters. 
This motivates the design of the proposed BernNet, which can learn arbitrary non-negative spectral filters from real-world graph signals.

\section{Experiments}
\label{sec:exp}
In this section, we conduct experiments to evaluate BernNet's capability to learn arbitrary filters as well as the performance of BernNet on real datasets. All the experiments are conducted on a machine with an NVIDIA TITAN V GPU (12GB memory), Intel Xeon CPU (2.20 GHz), and 512GB of RAM.

\begin{figure}[t]
    \centering
   \subfigure[Original]{
   \includegraphics[width=22mm]{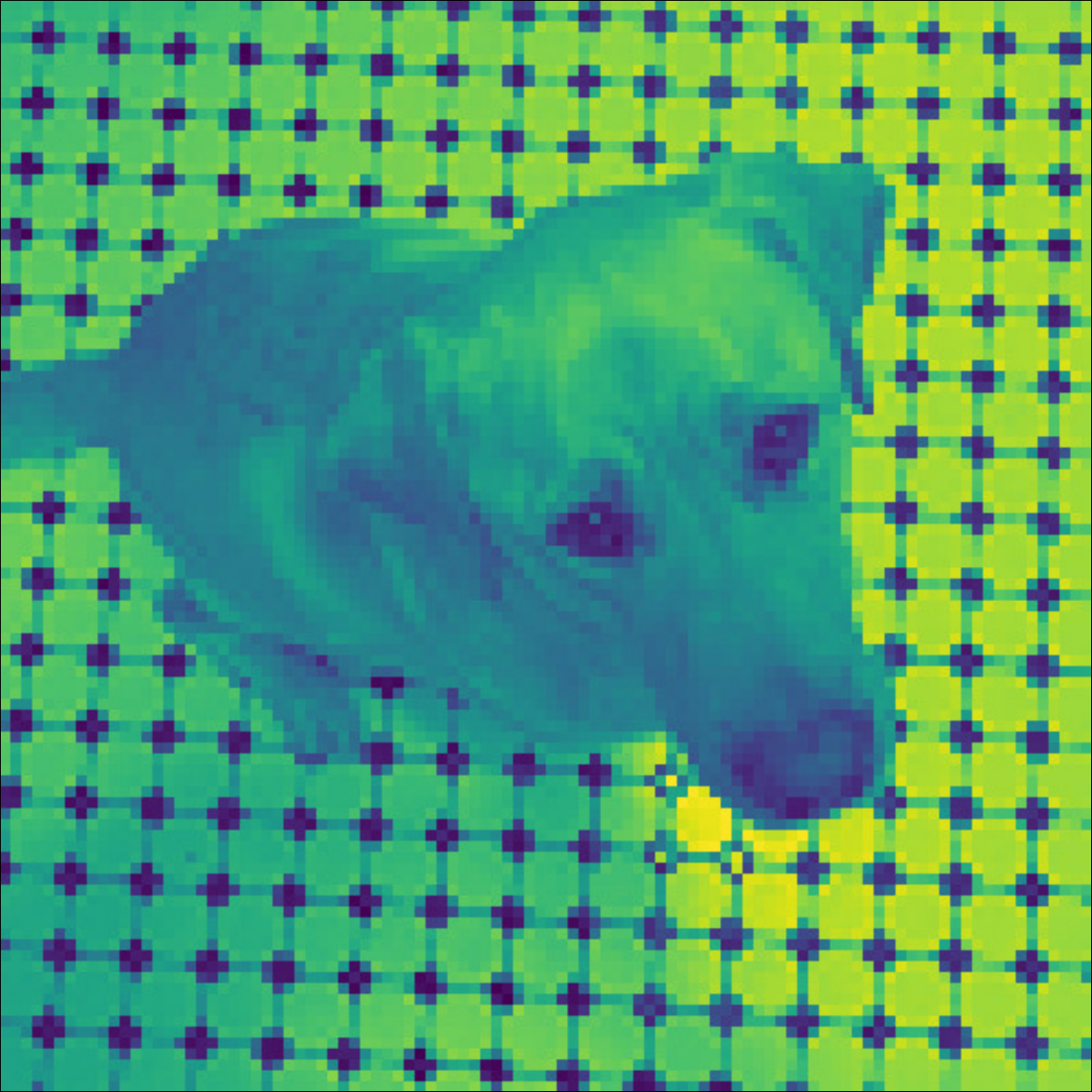}
   }
   \hspace{-3mm}
   \subfigure[Low-pass]{
   \includegraphics[width=22mm]{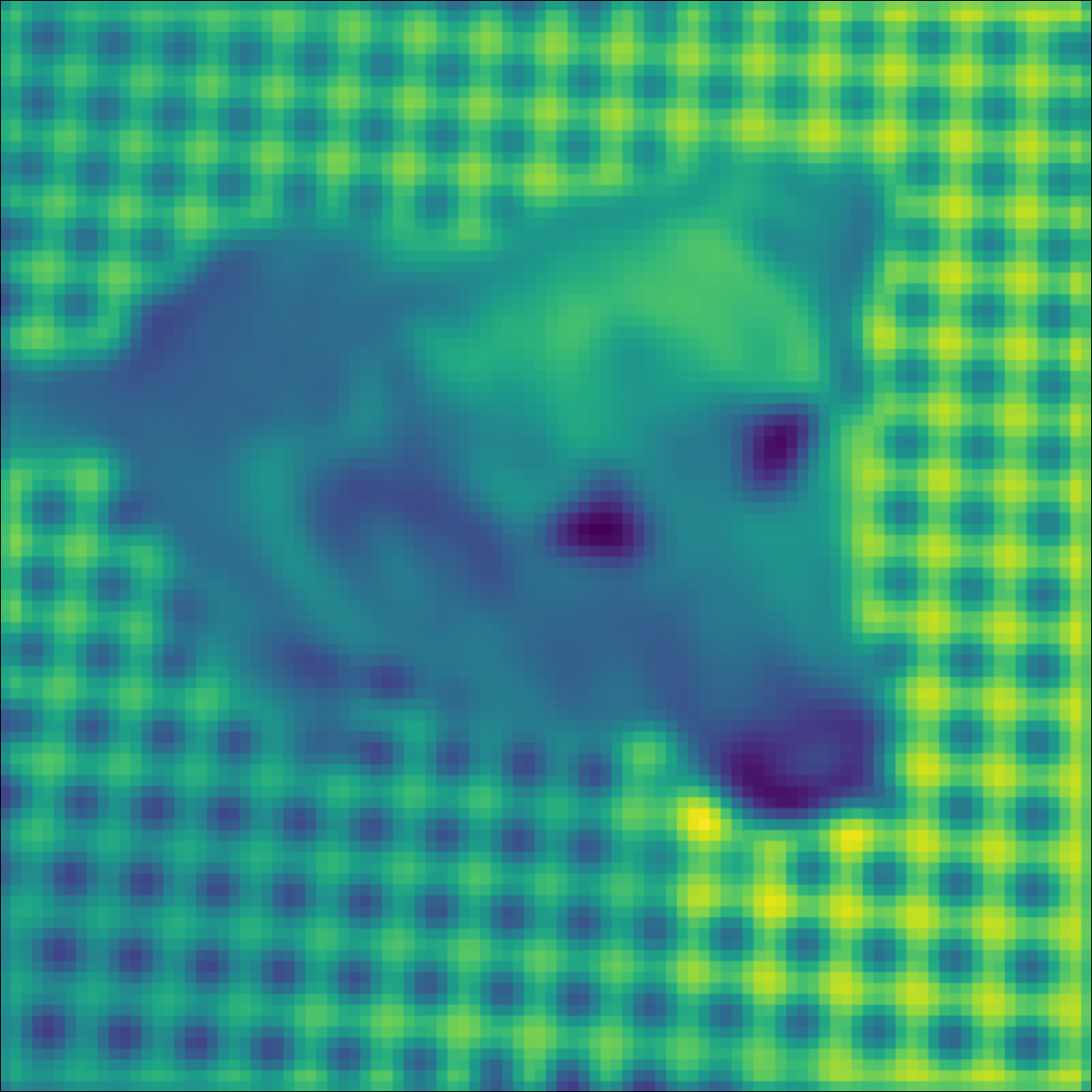}
   }
   \hspace{-3mm}
   \subfigure[High-pass]{
   \includegraphics[width=22mm]{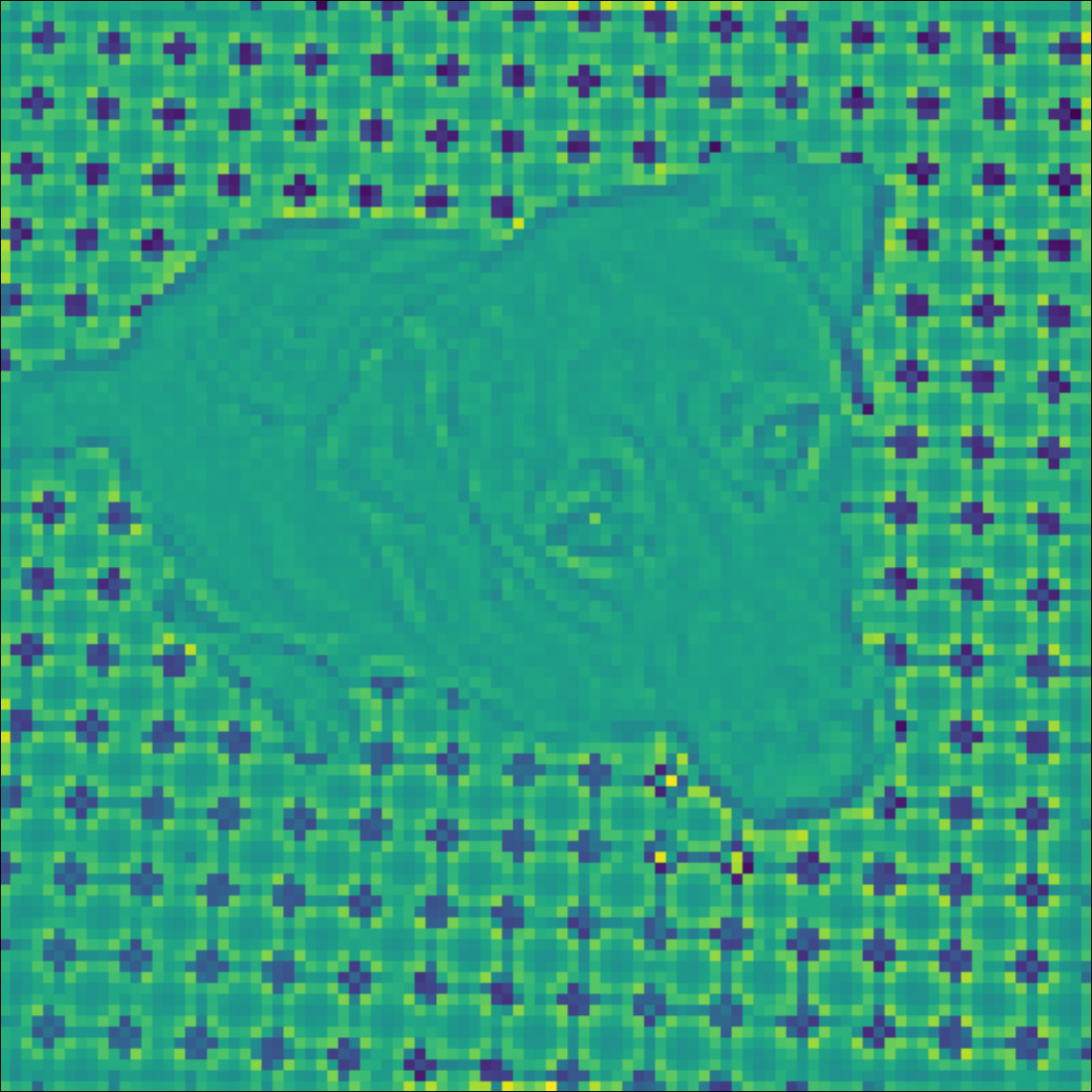}
   }
   \hspace{-3mm}
   \subfigure[Band-pass]{
   \includegraphics[width=22mm]{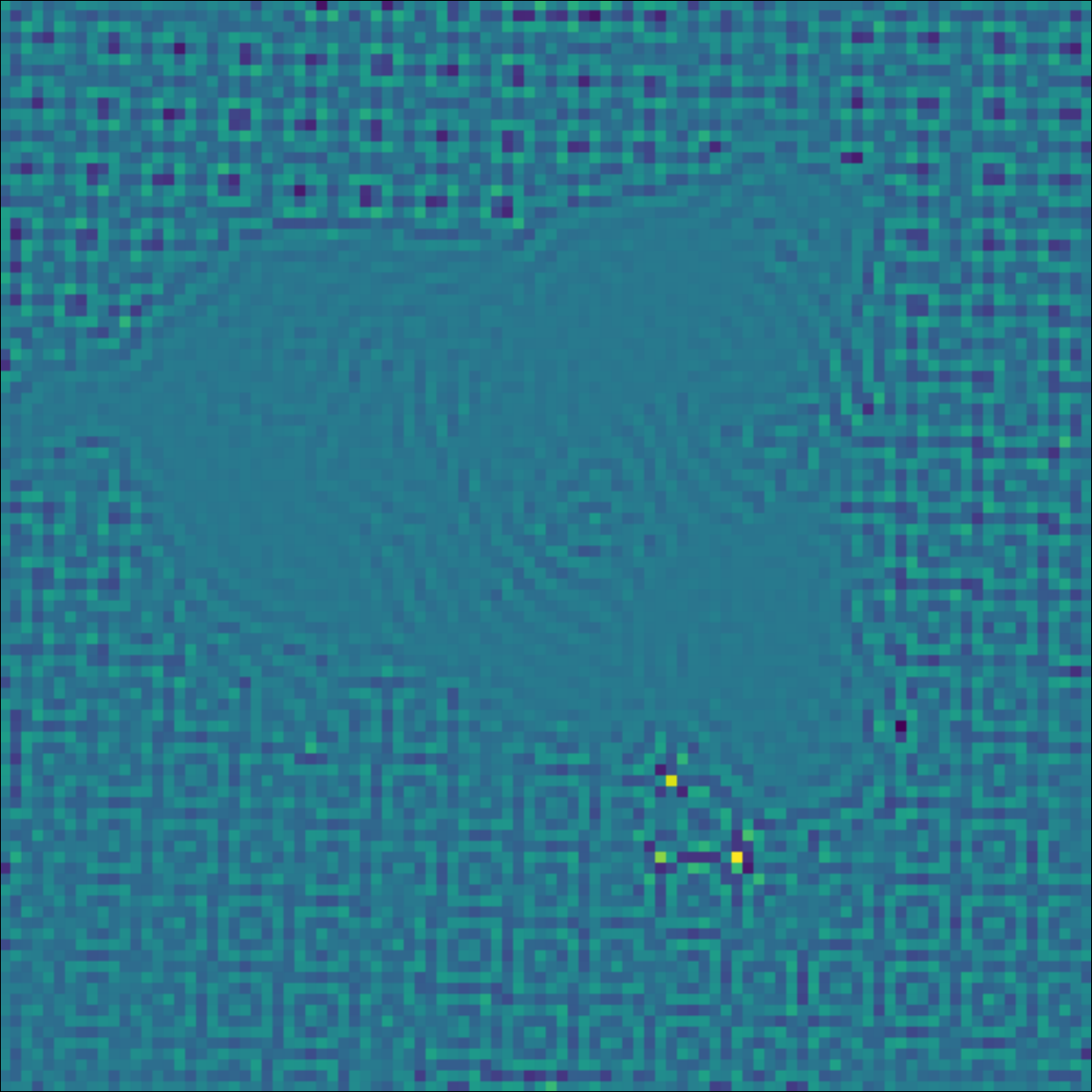}
   }
   \hspace{-3mm}
   \subfigure[Band-rejection]{
   \includegraphics[width=22mm]{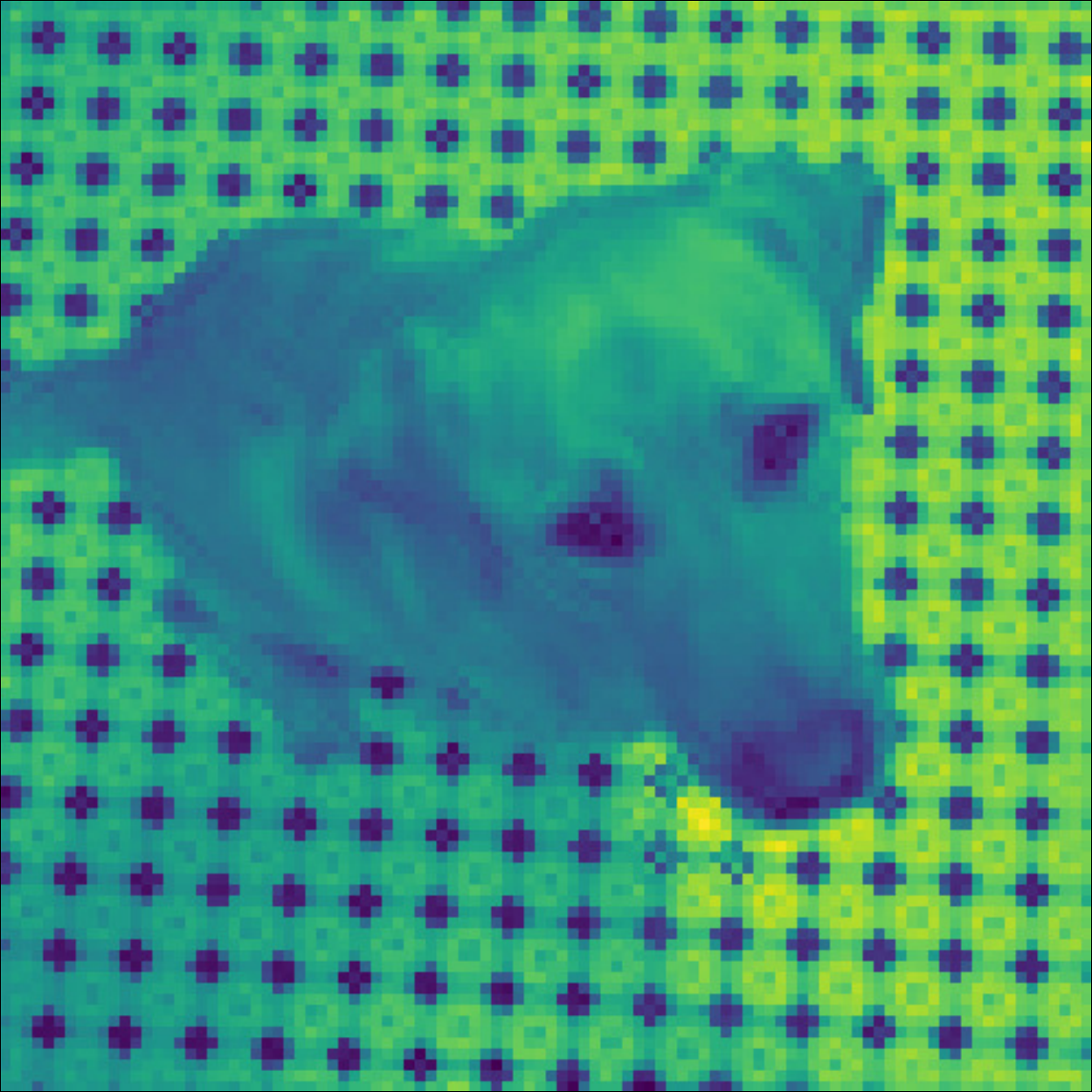}
   }
   \hspace{-3mm}
   \subfigure[Comb]{
   \includegraphics[width=22mm]{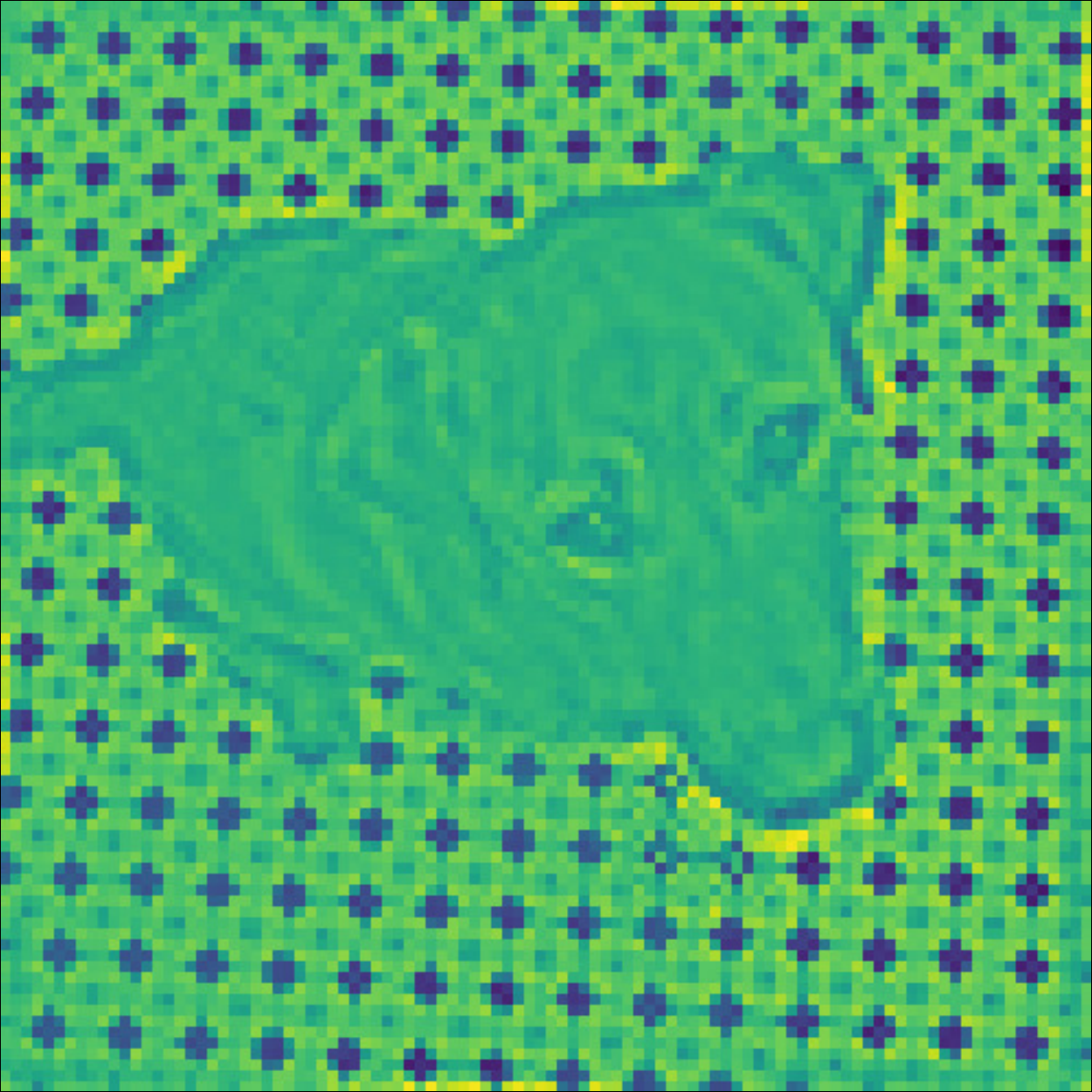}
   }
   \vspace{-2mm}
    \caption{A input image and the filtering results.}
    \label{fig:img_filter}
\end{figure}

\begin{table}[t]
\centering
\small
\caption{Average sum of squared error and $R^2$ score in parentheses.}
\label{image_res}
\resizebox{\textwidth}{!}{
\begin{tabular}{@{}lccccc@{}}
\toprule
& Low-pass &High-pass & Band-pass  & Band-rejection &Comb    \\\cmidrule(l){2-6}
  & $\exp(-10\lambda^2)$ &$1-\exp(-10\lambda^2)$ &$\exp(-10(\lambda-1)^2)$           &$1-\exp(-10(\lambda-1)^2)$ &$|\sin(\pi \lambda)|$  \\ \midrule
GCN       &3.4799(.9872) &67.6635(.2364) &25.8755(.1148) &21.0747(.9438) &50.5120(.2977)              \\
GAT       &2.3574(.9905) &21.9618(.7529) &14.4326(.4823) & 12.6384(.9652) &23.1813(.6957)              \\
GPR-GNN   &0.4169(.9984)  &0.0943(.9986)  &3.5121(.8551)  &3.7917(.9905) &4.6549(.9311)               \\
ARMA      &1.8478(.9932)  &1.8632(.9793)  &7.6922(.7098)  &8.2732(.9782) &15.1214(.7975)              \\
ChebNet  &0.8220(.9973)  &0.7867(.9903)   & 2.2722(.9104)   & 2.5296(.9934)   & 4.0735(.9447)                \\
BernNet   &\textbf{0.0314(.9999)}  &\textbf{0.0113(.9999)}  & \textbf{0.0411(.9984)}  & \textbf{0.9313(.9973)}   &\textbf{0.9982(.9868)}           \\
\bottomrule
\end{tabular}}
\end{table}

\subsection{Learning filters from the signal}
We conduct an empirical analysis on 50 real images with the resolution of 100×100 from the Image Processing Toolbox in Matlab. 
We conduct independent experiments on these 50 images and report the average of the evaluation index.
Following the experimental setting in~\cite{balcilar2021analyzing}, we regard each image as a 2D regular 4-neighborhood grid graph. The graph structure translates to an $10,000\times 10,000$ adjacency matrix while the pixel intensity translates to a $10,000$-dimensional signal vector. 


For each of the 50 images, we apply 5 different filters (low-pass, high-pass, band-pass, band-rejection and comb) to the spectral domain of its signal. The formula of each filter is shown in Table ~\ref{image_res}. Recall that applying a low-pass filter $\exp(-10\lambda^2)$ to the spectral domain $ \mathbf{L} = \mathbf{U} \textit{diag} \left[\lambda_1,\ldots,\lambda_n\right] \mathbf{U}^\top $ means applying 
$\mathbf{U} \textit{diag} \left[\exp(-10\lambda_1^2),\ldots,\exp(-10\lambda_n^2)\right] \mathbf{U}^\top$ to the graph signal. 
Figure~\ref{fig:img_filter} shows the one of the input image and the corresponding filtering results.

In this task, we use the original graph signal as the input and the filtering signal to supervise the training process. The goal is to minimize the square error between output and the filtering signal by learning the correct filter. We evaluate BernNet against  five popular GNN models: GCN~\cite{kipf2016semi}, GAT~\cite{gat}, GPR-GNN~\cite{chien2021GPR-GNN}, ARMA~\cite{bianchi2021ARMA} and ChebNet~\cite{Chebnet}. To ensure fairness, we use two convolutional units and a linear output layer for all models. 
We train all models with approximately 2k trainable parameters and tune the hidden units to ensure they have similar parameters. Following ~\cite{balcilar2021analyzing}, we discard any regularization or dropout and simply force the GNN to learn the input-output relation. For all models, we set the maximum number of epochs to 2000 and stop the training if the loss does not drop for 100 consecutive times and use Adam optimization with a 0.01 learning rate without decay. Models do not use the position information of the picture pixels. We use a mask to cover the edge nodes of the picture, so the problem can be regarded as a simple regression problem. 
For BernNet, we use a two-layer model, with each layer sharing the same set of $\theta_k$ for $k=0,\ldots,K$ and set $K=10$. For GPR-GNN, we use the officially released code (see the supplementary materials for URL and commit numbers) and set the order of polynomial filter $K=10$. Other baseline models are based on Pytorch Geometric implementation~\cite{fey2019fast}. The more detailed experiments setting can be found in the Appendix.

Table ~\ref{image_res} shows the average of the sum of squared error (lower the better) and the $R^2$ scores (higher the better).
We first observe that GCN and GAT can only handle low-pass filters, which concurs with the theoretical analysis in~\cite{balcilar2021analyzing}. GPR-GNN, ARMA and ChebNet can learn different filters from the signals. However, BernNet consistently outperformed these models by a large margin on all tasks in terms of both metrics. We attribute this quality to BernNet's ability to tune the coefficients $\theta_k$'s, which directly correspond to the uniformly sampled filter values.




\subsection{Node classification on real-world datasets}
\label{5.2}
We now evaluate the performance of BernNet  against the competitors on real-world datasets. Following~\cite{chien2021GPR-GNN}, we include three citation graph Cora, CiteSeer and PubMed~\cite{sen2008collective,yang2016revisiting}, and the Amazon co-purchase graph Computers and Photo~\cite{mcauley2015image}. As shown in~\cite{chien2021GPR-GNN} these 5 datasets are homophilic graphs on which the connected nodes tend to share the same label. We also include the Wikipedia graph Chameleon and Squirrel~\cite{musae}, the Actor co-occurrence graph, and webpage graphs Texas and Cornell from WebKB\footnote[3]{\url{http://www.cs.cmu.edu/afs/cs.cmu.edu/project/theo-11/www/wwkb/}}~\cite{pei2020geom}. These 5 datasets are heterophilic datasets on which connected nodes tend to have different labels.
We summarize the statistics of these datasets in Table~\ref{datasets}.

\begin{table}[t]
\centering
\caption{Dataset statistics.}
\label{datasets}
\resizebox{\textwidth}{!}{
\begin{tabular}{@{}lcccccccccc@{}}
\toprule
         & Cora & CiteSeer & PubMed & Computers & Photo  & Chameleon & Squirrel & Actor & Texas & Cornell \\ \midrule
Nodes    & 2708 & 3327     & 19717  & 13752     & 7650   & 2277      & 5201     & 7600  & 183   & 183     \\
Edges    & 5278 & 4552     & 44324  & 245861    & 119081 & 31371     & 198353   & 26659 & 279   & 277     \\
Features & 1433 & 3703     & 500    & 767       & 745    & 2325      & 2089     & 932   & 1703  & 1703    \\
Classes  & 7    & 6        & 5      & 10        & 8      & 5         & 5        & 5     & 5     & 5       \\ \bottomrule
\end{tabular}}
\end{table}

Following~\cite{chien2021GPR-GNN}, we perform full-supervised node classification task with each model, where we randomly split the node set into train/validation/test set with ratio 60\%/20\%/20\%. For fairness, we generate 10 random splits by random seeds and evaluate all models on the same splits, and report the average metric for each model. 


We compare BernNet  with 6 baseline models: MLP, GCN~\cite{kipf2016semi}, GAT~\cite{gat}, APPNP~\cite{appnp}, ChebNet~\cite{Chebnet}, and GPR-GNN~\cite{chien2021GPR-GNN}.  For GPR-GNN, we use the officially released code (see the supplementary materials for URL and commit numbers) and set the order of polynomial filter $K=10$. For other models, we use the corresponding Pytorch Geometric library implementations~\cite{fey2019fast}. For BernNet, we use the following propagation process:
\begin{equation}
    \mathbf{Z} = \sum\limits_{k=0}^{K}\theta_k \frac{1}{2^K}\binom{K}{k} (2\mathbf{I}-\mathbf{L})^{K-k}\mathbf{L}^{k} f\left(\mathbf{X}\right),
\end{equation}
where $f(\mathbf{X})$ is
a 2-layer MLP with 64 hidden units on the feature matrix $\mathbf{X}$. 
Note that this propagation process is almost identical to that of APPNP or GPR-GNN. 
The only difference is that we substitute the Generalized PageRank polynomial with Bernstein polynomial. We set the $K=10$ and use different learning rate and dropout for the linear layer and the propagation layer. For all models, we optimal leaning rate over $\{0.001,0.002,0.01,0.05\}$ and weight decay $\{0.0,0.0005\}$. More detailed experimental settings are discussed in Appendix.

We use the micro-F1 score with a 95\% confidence interval as the evaluation metric. 
The relevant results are summarized in Table~\ref{acc_res}. 
Boldface letters indicate the best result for the given confidence interval. 
We observe that BernNet provides the best results on seven out of the ten datasets. 
On the other three datasets, BernNet also achieves competitive results against SOTA methods.

\begin{figure}[t]
    \centering
    \hspace{-2mm}
   \subfigure[Chameleon]{
   \includegraphics[width=33mm]{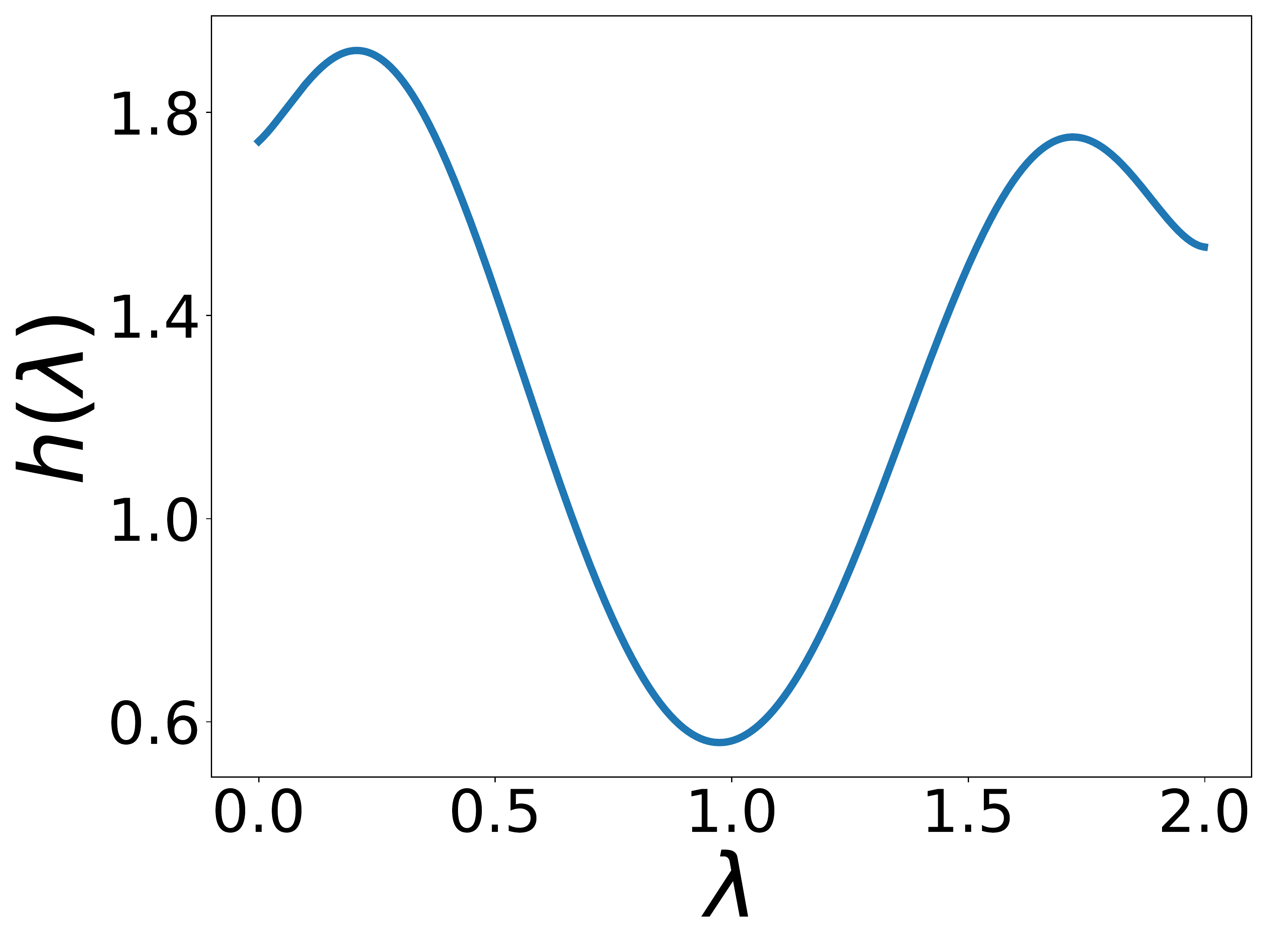}
   }
   \hspace{-2mm}
   \subfigure[Actor]{
   \includegraphics[width=33mm]{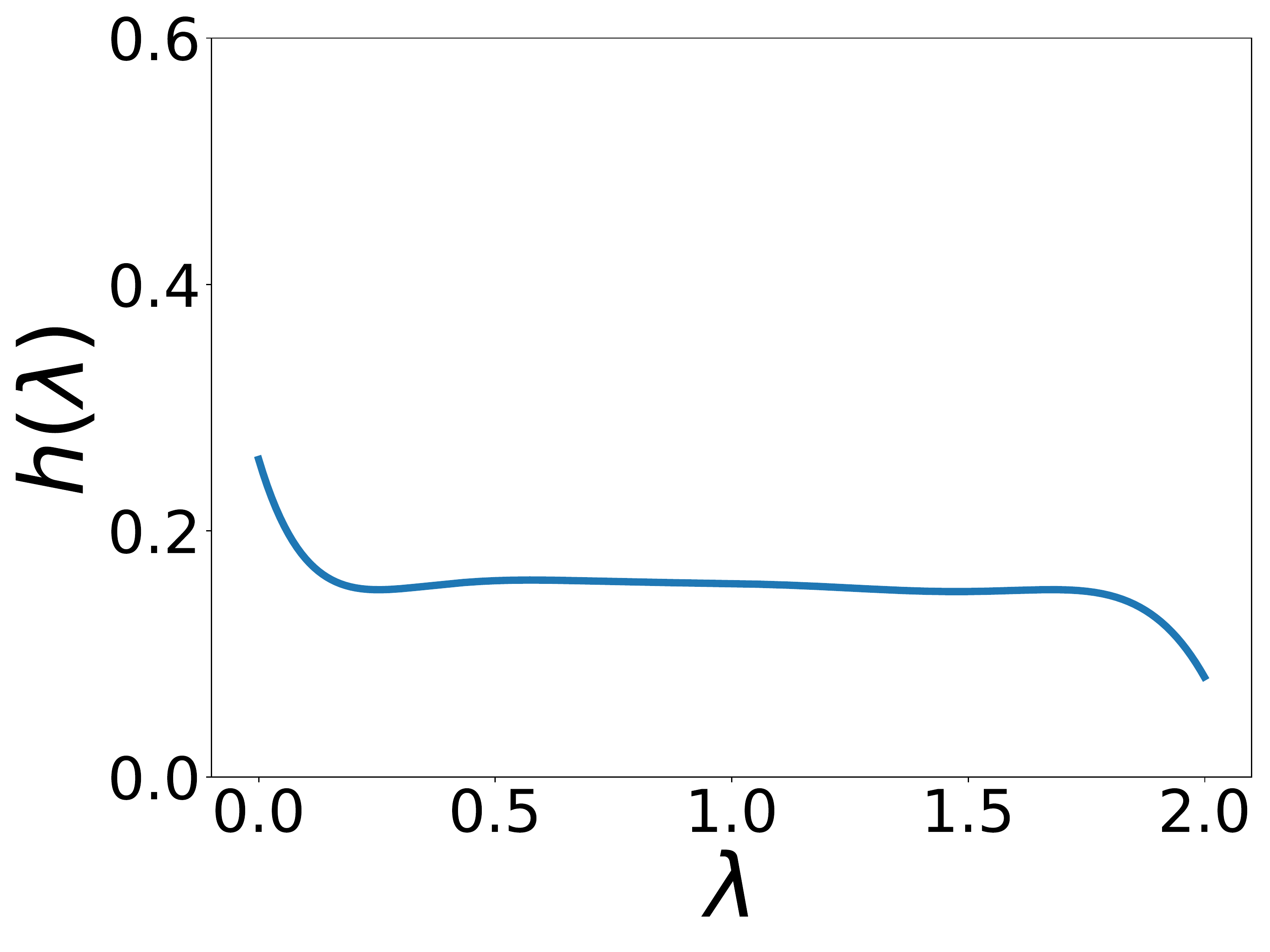}
   }
   \hspace{-2mm}
   \subfigure[Squirrel]{
   \includegraphics[width=33mm]{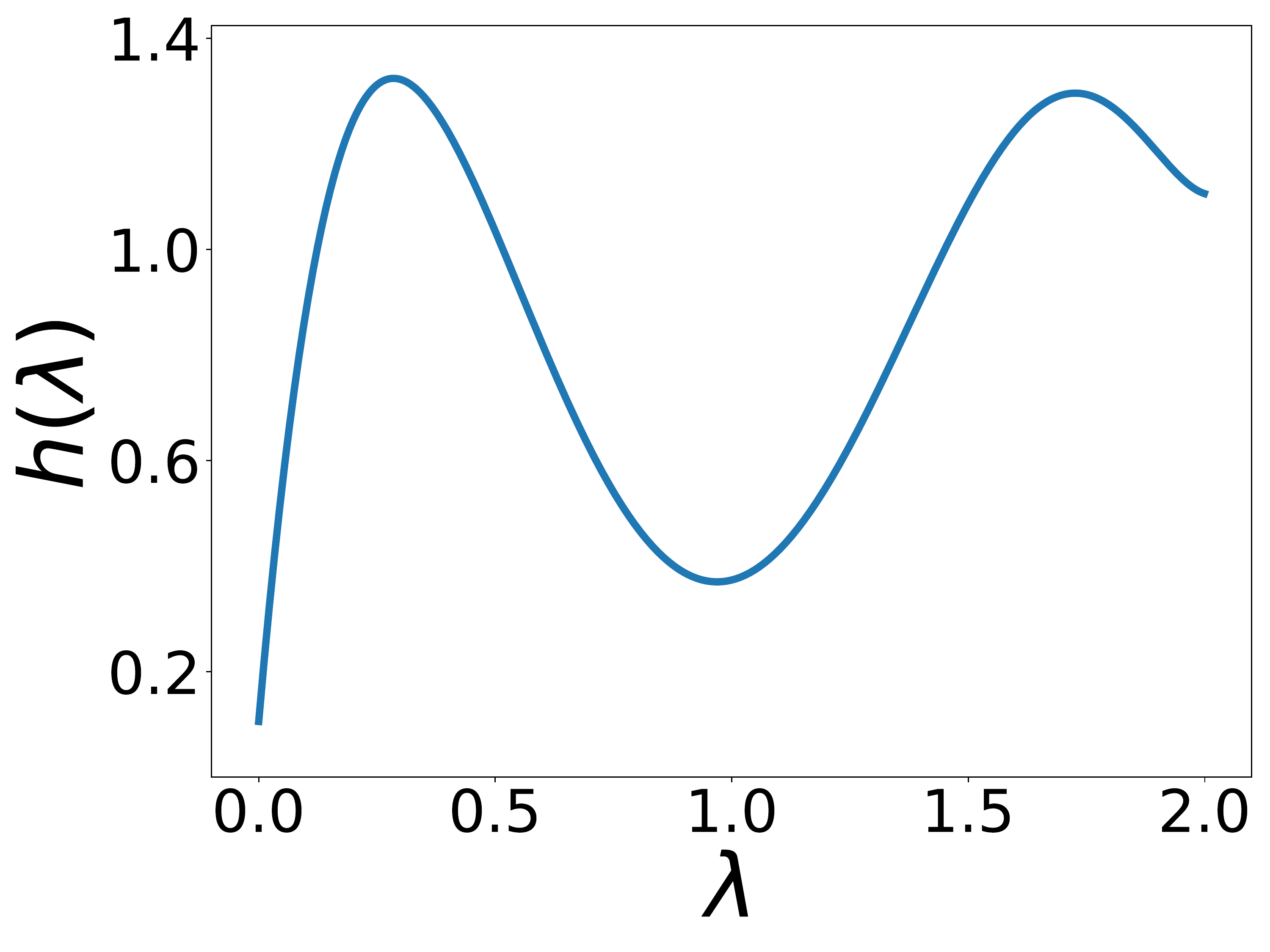}
   }
   \hspace{-2mm}
   \subfigure[Texas]{
   \includegraphics[width=33mm]{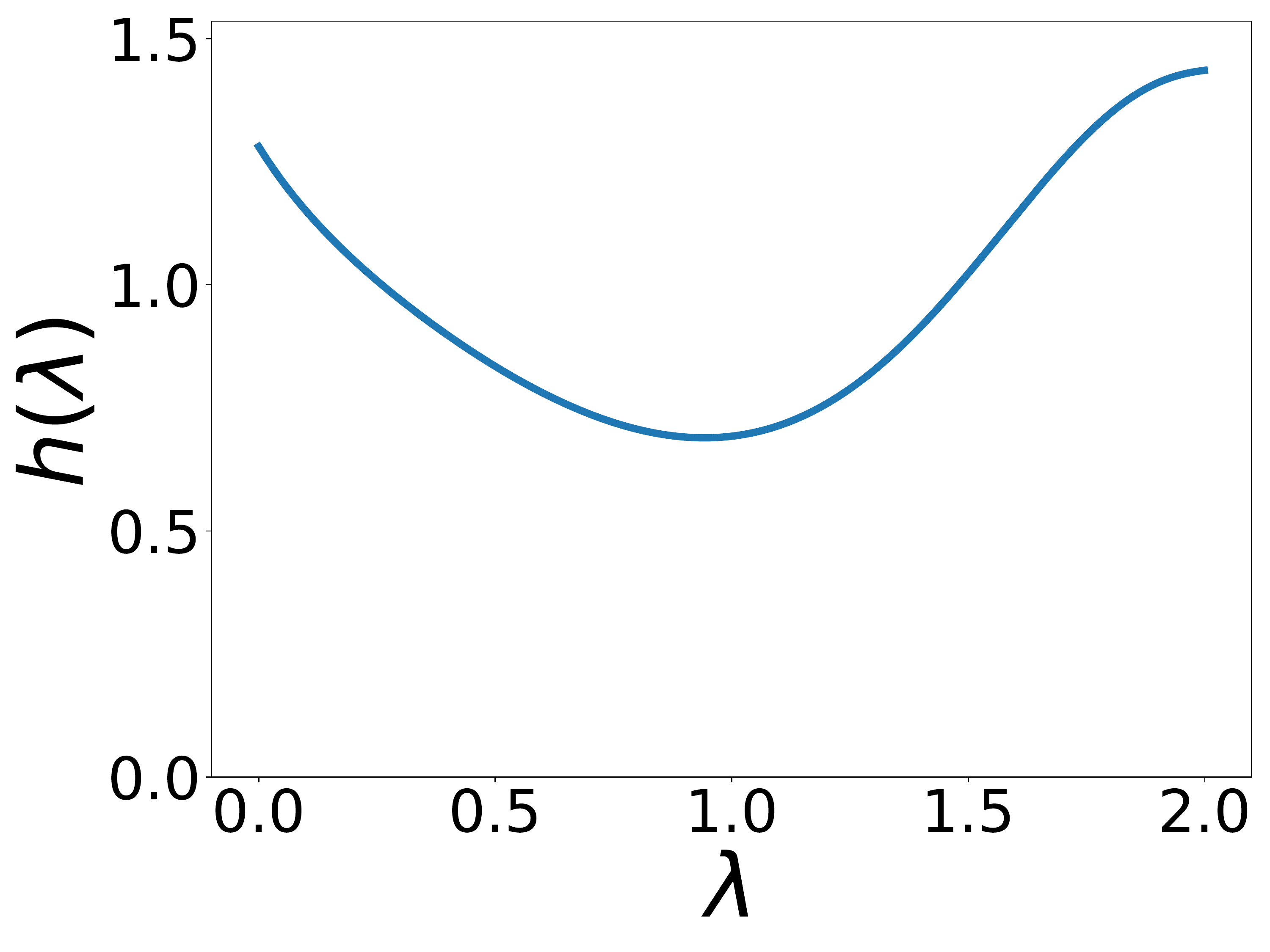}
   }
    \caption{Filters learnt from real-world datasets by BernNet.}
    \label{fig:filters_realdata}
\end{figure}

\begin{table}[t]
\centering
\small
\caption{Results on real world benchmark datasets: Mean accuracy (\%) ± 95\% confidence interval.}
\label{acc_res}
\resizebox{\textwidth}{!}{
\begin{tabular}{l c c c c c c c}
\toprule
& GCN & GAT & APPNP & MLP & ChebNet & GPR-GNN & BernNet \\ 
\hline
Cora    
&87.14$_{\pm\text{1.01}}$ &88.03$_{\pm\text{0.79}}$  &88.14$_{\pm\text{0.73}}$ &76.96$_{\pm\text{0.95}}$ &86.67$_{\pm\text{0.82}}$ &\textbf{88.57}$_{\pm\textbf{0.69}}$ &88.52$_{\pm\text{0.95}}$\\

CiteSeer 
&79.86$_{\pm\text{0.67}}$ &\textbf{80.52}$_{\pm\textbf{0.71}}$ &80.47$_{\pm\text{0.74}}$ &76.58$_{\pm\text{0.88}}$ &79.11$_{\pm\text{0.75}}$ &80.12$_{\pm\text{0.83}}$ &80.09$_{\pm\text{0.79}}$\\

PubMed   
&86.74$_{\pm\text{0.27}}$ &87.04$_{\pm\text{0.24}}$ &88.12$_{\pm\text{0.31}}$ &85.94$_{\pm\text{0.22}}$ &87.95$_{\pm\text{0.28}}$ &88.46$_{\pm\text{0.33}}$  &\textbf{88.48}$_{\pm\textbf{0.41}}$\\

Computers 
&83.32$_{\pm\text{0.33}}$ &83.32$_{\pm\text{0.39}}$ &85.32$_{\pm\text{0.37}}$ &82.85$_{\pm\text{0.38}}$ &87.54$_{\pm\text{0.43}}$ &86.85$_{\pm\text{0.25}}$  &\textbf{87.64}$_{\pm\textbf{0.44}}$\\

Photo 
&88.26$_{\pm\text{0.73}}$ &90.94$_{\pm\text{0.68}}$ &88.51$_{\pm\text{0.31}}$ &84.72$_{\pm\text{0.34}}$  &93.77$_{\pm\text{0.32}}$ &\textbf{93.85}$_{\pm\textbf{0.28}}$ &93.63$_{\pm\text{0.35}}$ \\

Chameleon 
&59.61$_{\pm\text{2.21}}$ &63.13$_{\pm\text{1.93}}$  &51.84$_{\pm\text{1.82}}$ &46.85$_{\pm\text{1.51}}$  &59.28$_{\pm\text{1.25}}$  &67.28$_{\pm\text{1.09}}$ &\textbf{68.29}$_{\pm\textbf{1.58}}$\\

Actor
 &33.23$_{\pm\text{1.16}}$  &33.93$_{\pm\text{2.47}}$  &39.66$_{\pm\text{0.55}}$ &40.19$_{\pm\text{0.56}}$  &37.61$_{\pm\text{0.89}}$  &39.92$_{\pm\text{0.67}}$ &\textbf{41.79}$_{\pm\textbf{1.01}}$\\

Squirrel
&46.78$_{\pm\text{0.87}}$ &44.49$_{\pm\text{0.88}}$  &34.71$_{\pm\text{0.57}}$ &31.03$_{\pm\text{1.18}}$ &40.55$_{\pm\text{0.42}}$ &50.15$_{\pm\text{1.92}}$  &\textbf{51.35}$_{\pm\textbf{0.73}}$\\

Texas
&77.38$_{\pm\text{3.28}}$ &80.82$_{\pm\text{2.13}}$   &90.98$_{\pm\text{1.64}}$   &91.45$_{\pm\text{1.14}}$  &86.22$_{\pm\text{2.45}}$ &92.95$_{\pm\text{1.31}}$ &\textbf{93.12}$_{\pm\textbf{0.65}}$ \\

Cornell
&65.90$_{\pm\text{4.43}}$ &78.21$_{\pm\text{2.95}}$ &91.81$_{\pm\text{1.96}}$  &90.82$_{\pm\text{1.63}}$  &83.93$_{\pm\text{2.13}}$  &91.37$_{\pm\text{1.81}}$ &\textbf{92.13}$_{\pm\textbf{1.64}}$\\

\bottomrule
\end{tabular}}
\end{table}

More interestingly, this experiment also shows BernNet can learn complex filters from real-world datasets with only the supervision of node labels.  Figure~\ref{fig:filters_realdata} plots some of the filters BernNet learnt in the training process. On Actor, BernNet learns an all-pass-alike filter, which concurs with the fact that MLP outperforms all other baselines on this dataset. On Chameleon and Squirrel, BernNet learns two comb-alike filters. Given that BernNet outperforms all competitors by at least 1\% on these two datasets, it may suggest that comb-alike filters are necessary for Chameleon and Squirrel. Figure~\ref{fig:coe_real_all_1} shows the Coefficients $\theta_k$ learnt from real-world datasets by BernNet. When comparing Figures ~\ref{fig:filters_realdata} and ~\ref{fig:coe_real_all_1}, we observe that the curves of filters and curves of coefficients are almost the same. This is because BernNet's coefficients are highly correlated with the spectral property of the target filter, which indicates BernNet Bernnet has strong interpretability.

\begin{figure}[t]
    \centering
   \hspace{-2mm}
   \subfigure[Chameleon]{
   \includegraphics[width=33mm]{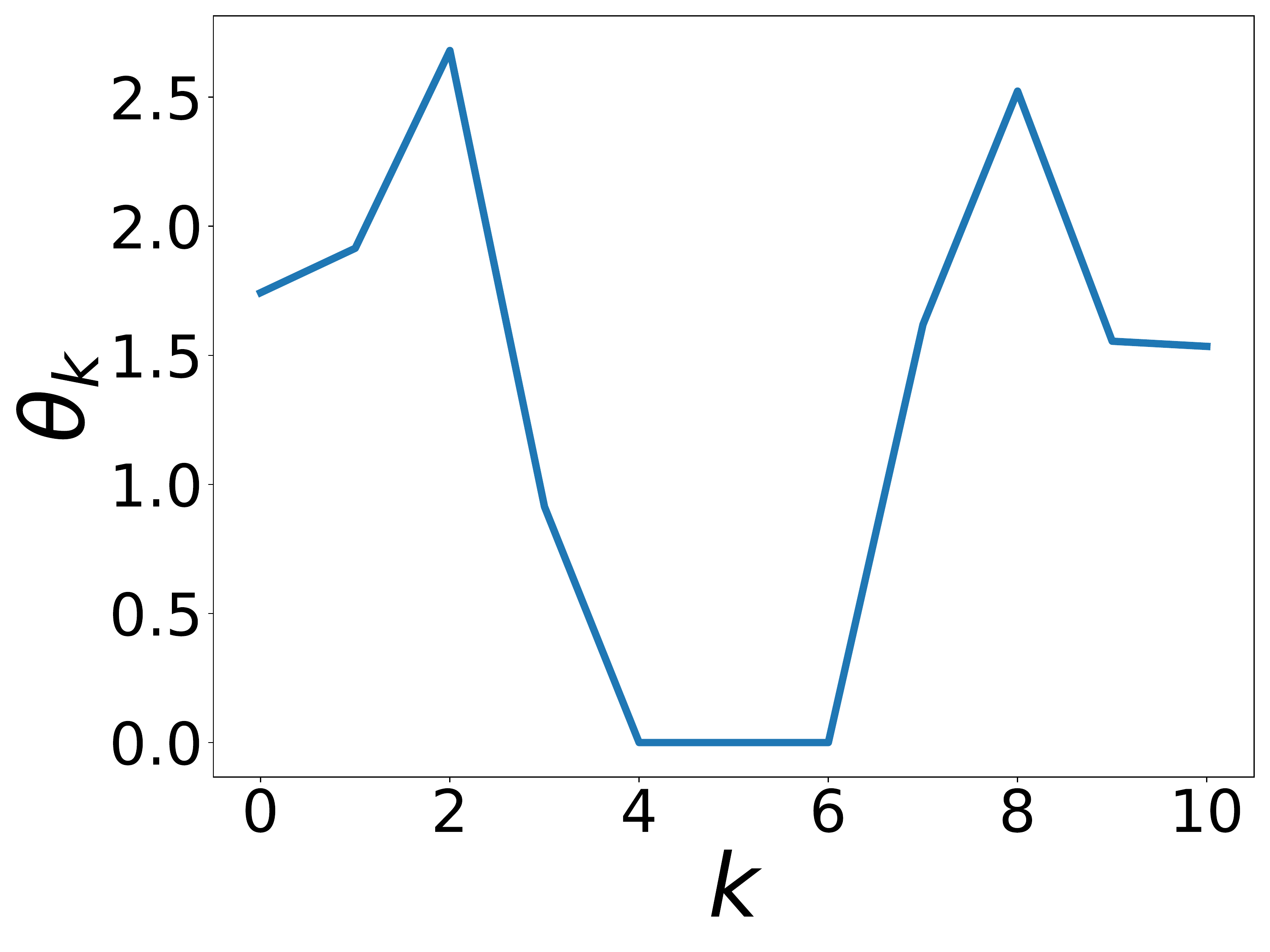}
   }
   \hspace{-2mm}
   \subfigure[Actor]{
   \includegraphics[width=33mm]{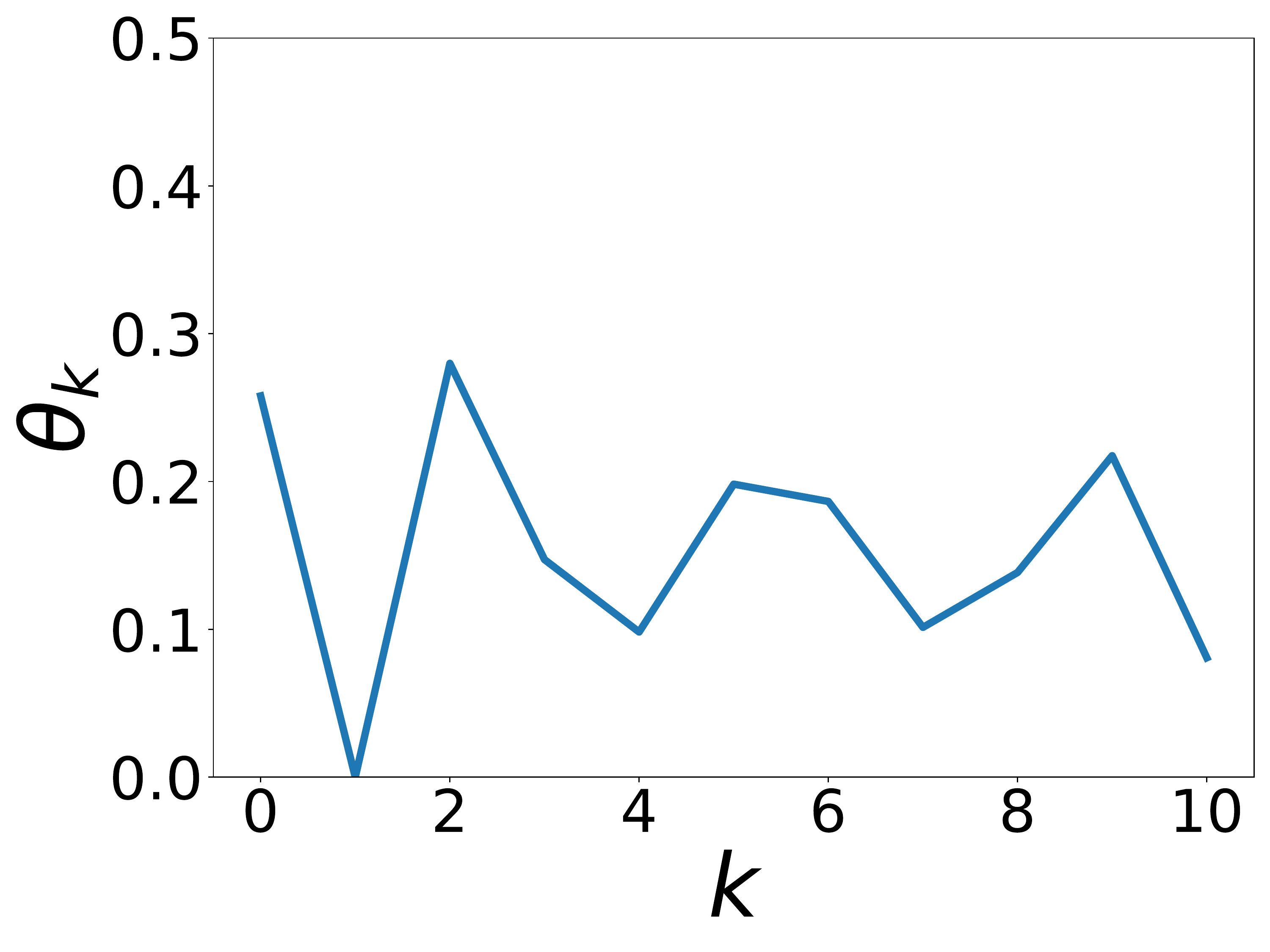}
   }
   \hspace{-2mm}
   \subfigure[Squirrel]{
   \includegraphics[width=33mm]{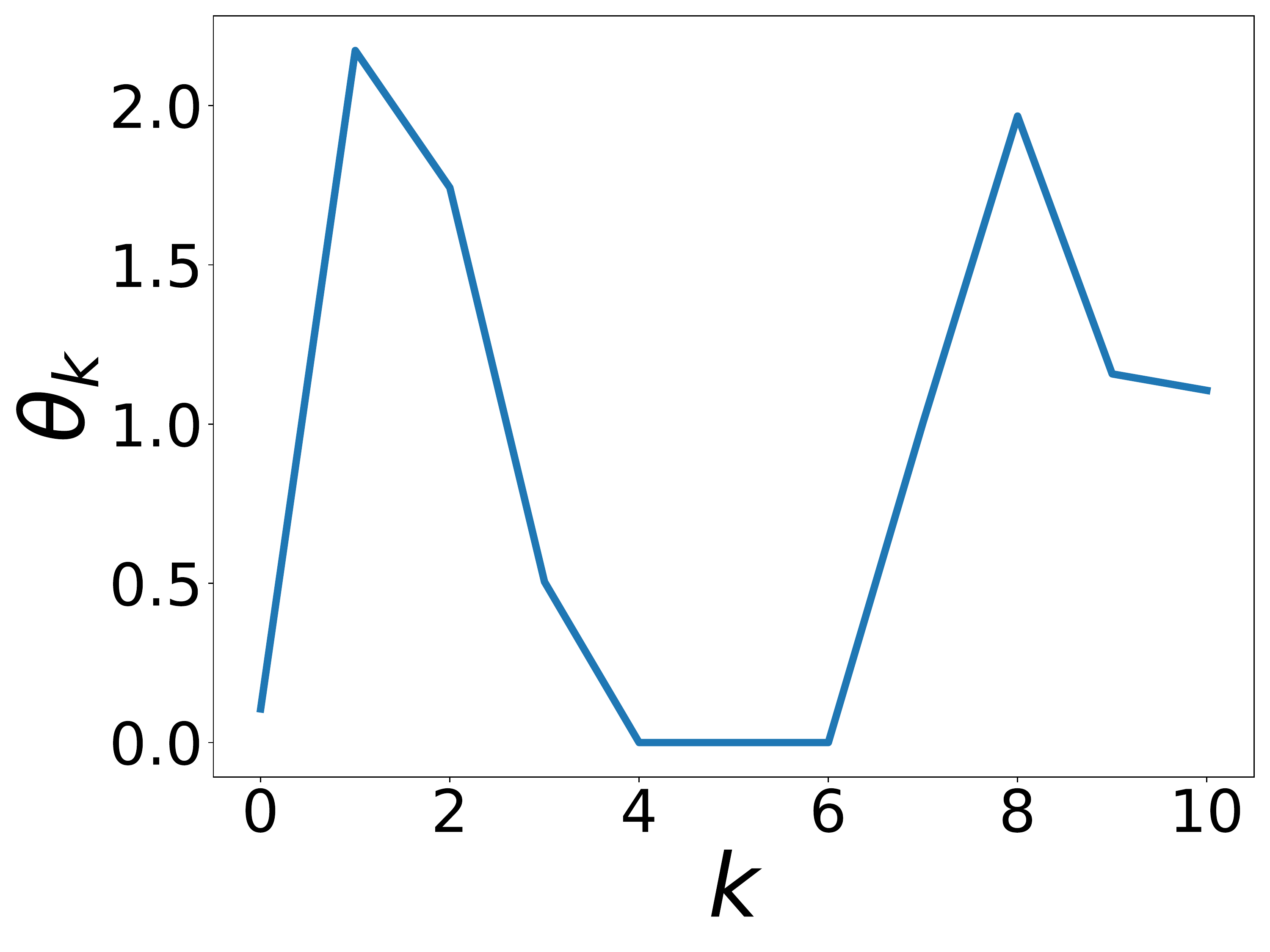}
   }
   \hspace{-2mm}
   \subfigure[Texas]{
   \includegraphics[width=33mm]{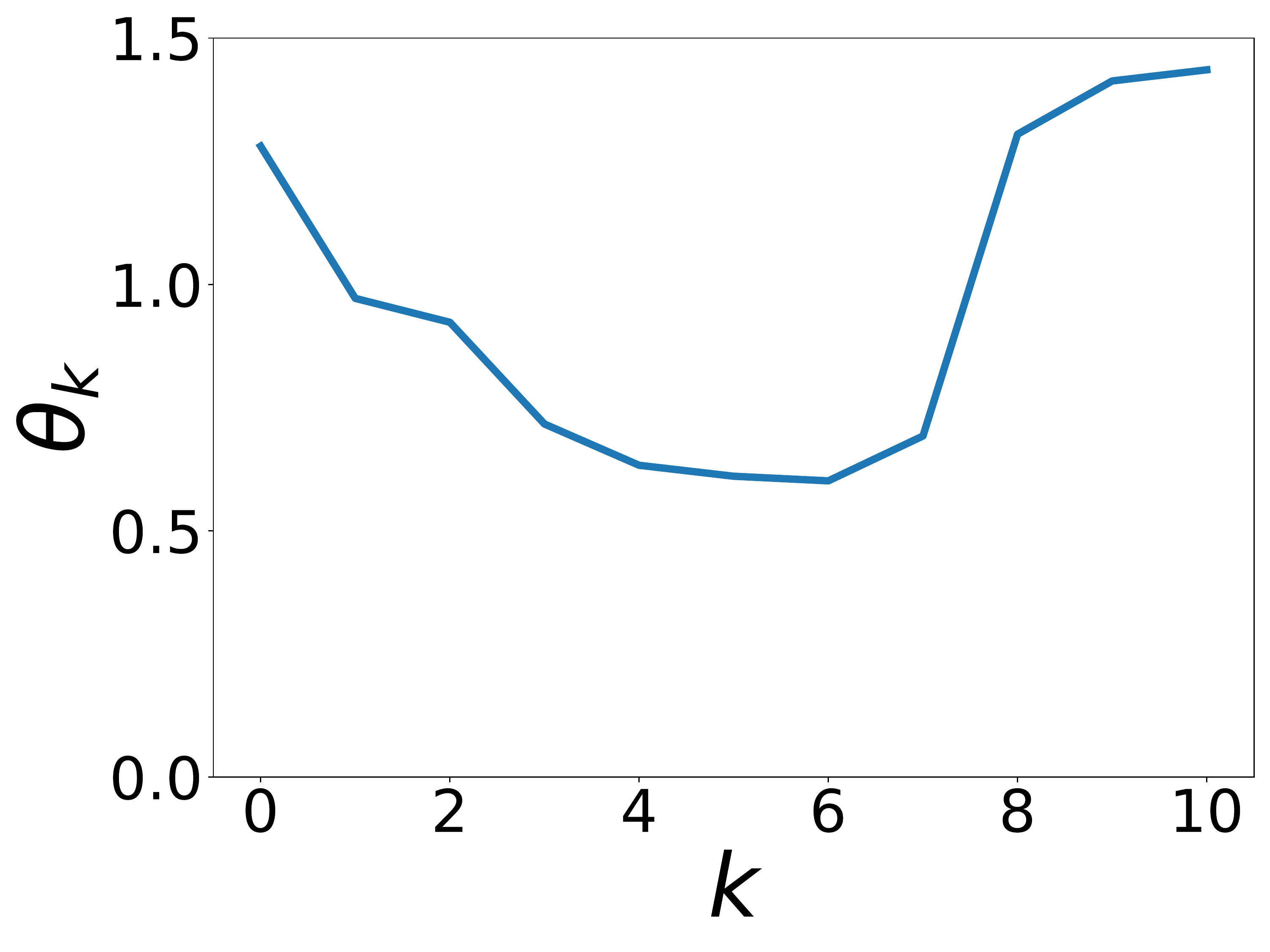}
   }
    \caption{Coefficients $\theta_k$ learnt from real-world datasets by BernNet.}
    \label{fig:coe_real_all_1}
\end{figure}
Finally, we present the train time for each method in Table ~\ref{time_complexity}. BernNet is slower than other methods due to its quadratic dependence on the degree $K$. However, compared to the SOTA method GPR-GNN, the margin is generally less than $2$, which is often acceptable in practice. In theory, both ChebNet~\cite{Chebnet} and GPR-GNN~\cite{chien2021GPR-GNN} are linear time complexity related to propagation step $K$, but BernNet is quadratic time complexity related to $K$. Delgado et al.~\cite{delgado2003linear} show that Bernstein approximation can be evaluated in linear time related to $K$ using the corner cutting algorithm. 
However, BernNet can not use this algorithm directly, because we need to multiply signal $\mathbf{x}$. How to convert BernNet to linear complexity will be a problem worth studying in the future.

\begin{table}[t]
\centering
\small
\caption{Average running time per epoch (ms)/average total running time (s).}
\label{time_complexity}
\resizebox{\textwidth}{!}{
\begin{tabular}{l c c c c c c c}
\toprule
& GCN & GAT & APPNP & MLP & ChebNet & GPR-GNN & BernNet \\ 
\hline
Cora &4.59/1.62 &9.56/2.03 &7.16/2.32 &3.06/0.93 &6.25/1.76 &9.94/2.21 &19.71/5.47\\

CiteSeer &4.63/1.95  &9.93/2.21 &7.79/2.77 &2.95/1.09 &8.28/2.56 &11.16/2.37 &22.36/6.32\\

PubMed &5.12/1.87 &16.16/3.41 &8.21/2.63  &2.91/1.61 &18.04/3.03 &10.45/2.81  &22.02/8.19\\

Computers &5.72/2.52  &30.91/7.85 &9.19/3.48 &3.47/1.31 &20.64/9.64 &16.05/4.38 &28.83/8.69\\

Photo &5.08/2.63 &19.97/5.41 &8.69/4.18  &3.67/1.66 &13.25/7.02 &13.96/3.94  &24.69/7.37\\

Chameleon &4.93/0.99 &13.11/2.66 &7.93/1.62 &3.14/0.63 &10.92/2.25 &10.93/2.41 &22.54/4.75 \\

Actor &5.43/1.09  &11.94/2.45 &8.46/1.71 &3.82/0.77 &7.99/1.62 &11.57/2.35 &23.34/5.81\\

Squirrel &5.61/1.13 &22.76/4.91 &8.01/1.61  &3.41/0.69 &38.12/7.78 &9.87/5.56 &25.58/9.23\\

Texas &4.58/0.92  &9.65/1.96 &7.83/1.63 &3.19/0.65 &6.51/1.34  &10.45/2.16 &23.35/4.81\\

Cornell  &4.83/0.97  &9.79/1.99 &8.23/1.68 &3.25/0.66 &5.85/1.22 &9.86/2.05 &22.23/5.26 \\

\bottomrule
\end{tabular}}
\end{table}

\section{Conclusion} 
This paper proposes {\em BernNet}, a graph neural network that provides a simple and intuitive mechanism for designing and learning an arbitrary spectral filter via  Bernstein polynomial approximation. Compared to previous methods, BernNet can approximate complex filters such as band-rejection and comb filters, and can provide better interpretability. Furthermore, the polynomial filters designed and learned by BernNet are always valid. Experiments show that BernNet outperforms SOTA methods in terms of effectiveness on both synthetic and real-world datasets. For future work, an interesting direction is to improve the efficiency of BernNet.

\section*{Broader Impact}
\label{sec:impact}
The proposed BernNet algorithm addresses the challenge of designing and learning arbitrary spectral filters on graphs. We consider this algorithm a general technical and theoretical contribution, without any foreseeable specific impacts. For applications in bioinformatics, computer vision, and natural language processing, applying the BernNet algorithm may improve the performance of existing GNN models. We leave the exploration of other potential impacts to future work.

\section*{Acknowledgments and Disclosure of Funding}
Zhewei Wei was supported in part by National Natural Science Foundation of China (No. 61972401, No. 61932001 and No. 61832017), by Beijing Outstanding Young Scientist Program NO. BJJWZYJH012019100020098, by Alibaba Group through Alibaba Innovative Research Program, and by CCF-Baidu Open the Fund (NO.2021PP15002000). 
Zengfeng Huang was supported by National Natural Science Foundation of China Grant No. 61802069, and by Shanghai Science and Technology Commission Grant No. 17JC1420200.
Hongteng Xu was supported by Tencent AI Lab Rhino-Bird Joint Research Program.
This work is supported by China Unicom Innovation Ecological Cooperation Plan and by Intelligent Social Governance Platform, Major Innovation $\&$ Planning Interdisciplinary Platform for the “Double-First Class” Initiative, Renmin University of China. We also wish to acknowledge the support provided and contribution made by Public Policy and Decision-making Research Lab of Renmin University of China.

\newpage
\bibliographystyle{plain}
\bibliography{ref}

\newpage

\appendix

\section{Additional experimental details}
\subsection{Learning filters from the signal (Section 5.1)}
For all models, we use two convolutional layers and a linear output layer that projects the final node representation onto the single output for each node. We train all models with approximately 2k trainable parameters and tune the hidden units to ensure they have similar parameters. We discard any regularization or dropout and simply force the GNN to learn the input-output relation. We stop the training if the loss does not drop for 100 consecutive epochs with a maximum limit of 2000 epochs and use Adam optimization with a 0.01 learning rate without decay. For GPR-GNN, we use the officially released code (URL and commit number in Table~\ref{url_commit}), and other baseline models are based on Pytorch Geometric implementation~\cite{fey2019fast}.

For GCN, we set the hidden units to 32 and set the linear units to 64. For GAT, the first layer has 4 attention heads and each head has 8 hidden; the second layer has 8 attention heads and each head has 8 hidden. And we set the linear units to 64. For ARMA, we set the ARMA layer and ARMA stacks to 1, and set the hidden units to 32 and set the linear units to 64. For ChebNet, we use 3 steps propagation for each layer with 32 hidden units and set the linear units to 64. For GPR-GNN, we set the hidden units to 32 with 10 steps propagation and set the linear units to 64, and use the random initialization for the GPR part. For BernNet, we use same set of $\theta_k$ for $k=0,\ldots,K$ in each layer and set $K=10$.

\begin{table}[ht]
    \centering
    \caption{URL and commit number of GPR-GNN codes}
    \label{url_commit}
    \begin{tabular}{@{}lcc@{}}
    \toprule
         & URL &Commit  \\ \midrule
         GRP-GNN&\url{https://github.com/jianhao2016/GPRGNN} &2507f10\\ \bottomrule
    \end{tabular}
\end{table}
Regarding the analysis experiment in section~\ref{le:comp_filter}, BenNet's settings are the same as above and the image used is Figure~\ref{fig:img_filter}.

\subsection{Node classification on real-world datasets (Section 5.2)}
In this experiment, we also  use the officially released code (URL and commit number in Table~\ref{url_commit}) for GPR-GNN and Pytorch Geometric implementation~\cite{fey2019fast} for other models. For all models, we use two convolutional layers and use early stopping 200 with a maximum of 1000 epochs for all datasets. We use the Adam optimizer to train the models and optimal leaning rate over $\{0.001,0.002,0.01,0.05\}$ and weight decay $\{0.0,0.0005\}$. 

\begin{table}[t]
\centering
\caption{ Hyperparameters for BernNet on real-world datasets.}
\label{hy:bern_real}
\resizebox{\textwidth}{!}{
\begin{tabular}{@{}lcccccccc@{}}
\toprule
    Datasets&\makecell[c]{Linear layer\\learning rate} &\makecell[c]{Propagation layer\\learning rate}  &\makecell[c]{Hidden\\dimension} &\makecell[c]{Propagation layer\\dropout}
    &\makecell[c]{Linear layer\\dropout} &$K$ &\makecell[c]{Weight\\decay}   \\ \midrule
Cora  &0.01 &0.01  &64  &0.0 &0.5 &10  &0.0005\\

CiteSeer &0.01   &0.01  &64  &0.5   &0.5 &10  &0.0005\\

PubMed &0.01   &0.01  &64  &0.0   &0.5  &10  &0.0\\

Computers &0.01   &0.05  &64  &0.6   &0.5  &10  &0.0005\\

Photo &0.01   &0.01  &64  &0.5   &0.5  &10  &0.0005\\

Chameleon &0.05   &0.01  &64  &0.7   &0.5 &10   &0.0\\
Actor &0.05   &0.01  &64  &0.9   &0.5 &10   &0.0\\
Squirrel &0.05   &0.01  &64  &0.6   &0.5  &10  &0.0\\
Texas  &0.05  &0.002  &64  &0.5   &0.5 &10   &0.0005\\
Cornell  &0.05   &0.001  &64  &0.5   &0.5 &10   &0.0005\\
\bottomrule
\end{tabular}}
\end{table}

For GCN, we use 64 hidden units for each GCN convolutional layer. For MLP, we use the 2-layer full connected network with 64 hidden units. For GAT, we make the first layer have 8 attention heads and each heads have 8 hidden units, the second layer have 1 attention head and 64 hidden units. For APPNP, we use 2-layer MLP with 64 hidden units and set the propagation steps $K$ to be 10. We search the optimal $\alpha$ within $\{0.1,0.2,0.5,0.9\}$. For ChebNet, we set the propagation steps $K$ to be 2 and use 32 hidden units for each layer, which the number of equivalent hidden units is 64 for this case. For GPR-GNN, we use 2-layer MLP with 64 hidden units and set the propagation steps $K$ to be 10, and use PPR initialization with $\alpha \in \{0.1,0.2,0.5,0.9\}$ for the GPR weights. For BernNet, we use 2-layer MLP with 64 hidden units and set the order $K=10$, and we optimize the learning rate for the linear layer and the propagation layer. We fix the dropout rate for the convolutional layer or the linear layer to be 0.5 for all models, and optimize the dropout rate for the propagation layer for GPR-GNN and BernNet. The Table~\ref{hy:bern_real} shows the hyperparameters of BernNet on real-world datasets.

Figure~\ref{fig:filters_real_all} plots the filters learnt from real-world datasets by BernNet. Besides some special filters discussed in section~\ref{5.2}, we find that BernNet learns a low-pass filter on Cora and CiteSeer which concurs the analysis in ~\cite{balcilar2021analyzing}. Figure~\ref{fig:coe_real_all} shows the Coefficients $\theta_k$ learnt from real-world datasets by BernNet. We observe that the curves of filters and curves of coefficients are almost the same, this is because BernNet's coefficients are highly correlated with the spectral property of the target filter which indicates BernNet Bernnet has strong interpretability.

\begin{figure}[ht]
    \centering
    \hspace{-2mm}
   \subfigure[Cora]{
   \includegraphics[width=33mm]{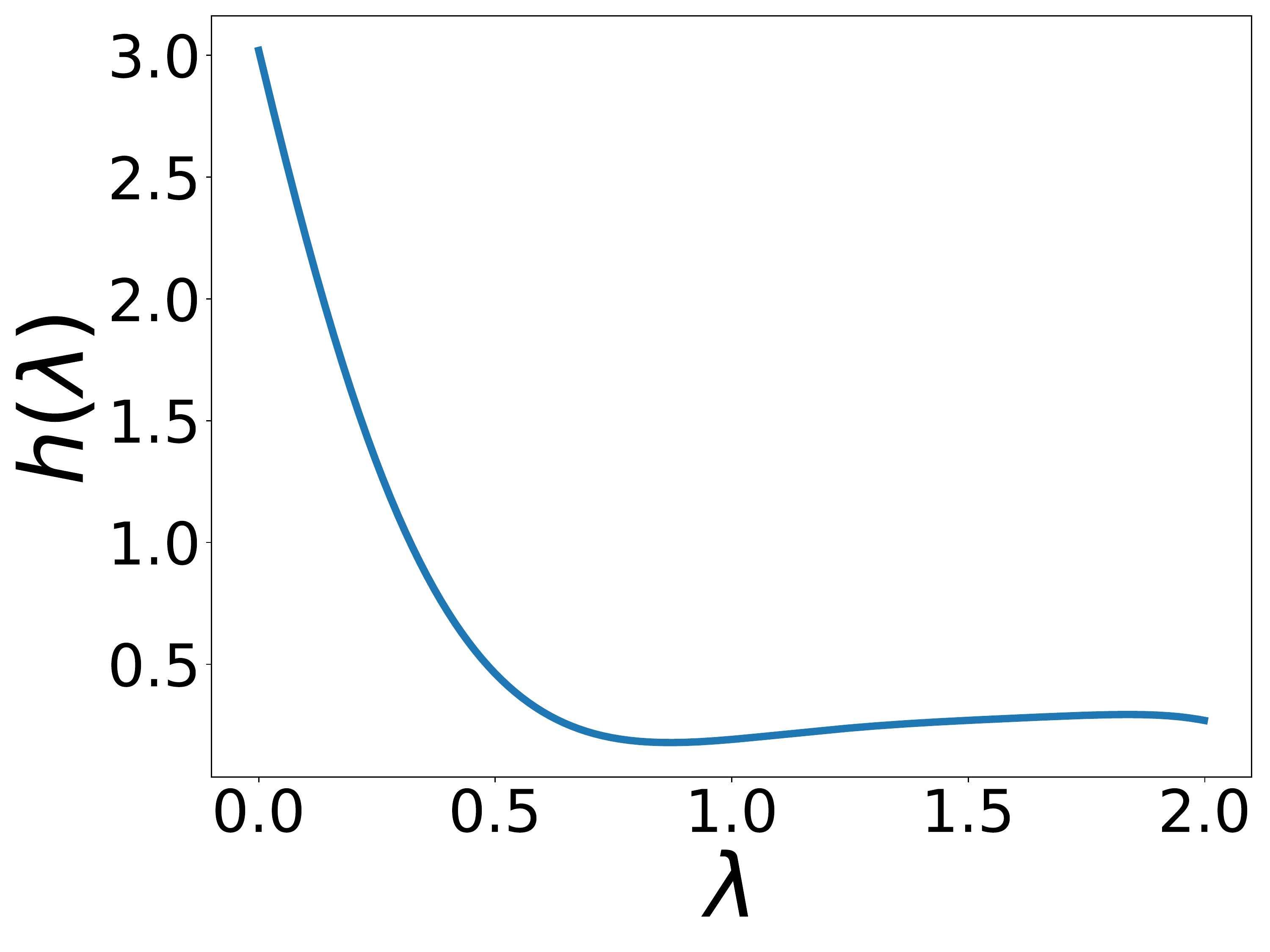}
   }
   \hspace{-2mm}
   \subfigure[CiteSeer]{
   \includegraphics[width=33mm]{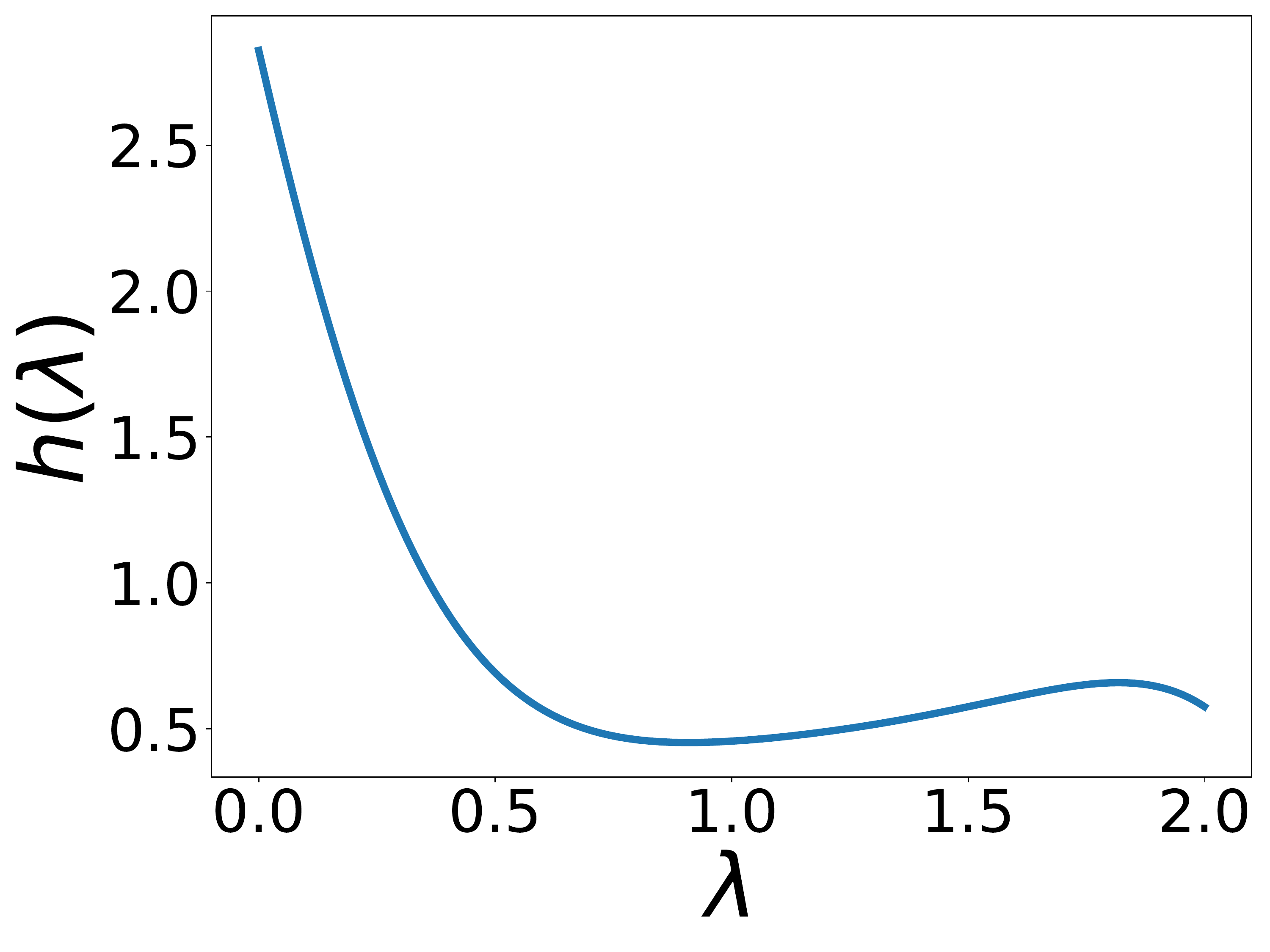}
   }
   \hspace{-2mm}
   \subfigure[PubMed]{
   \includegraphics[width=33mm]{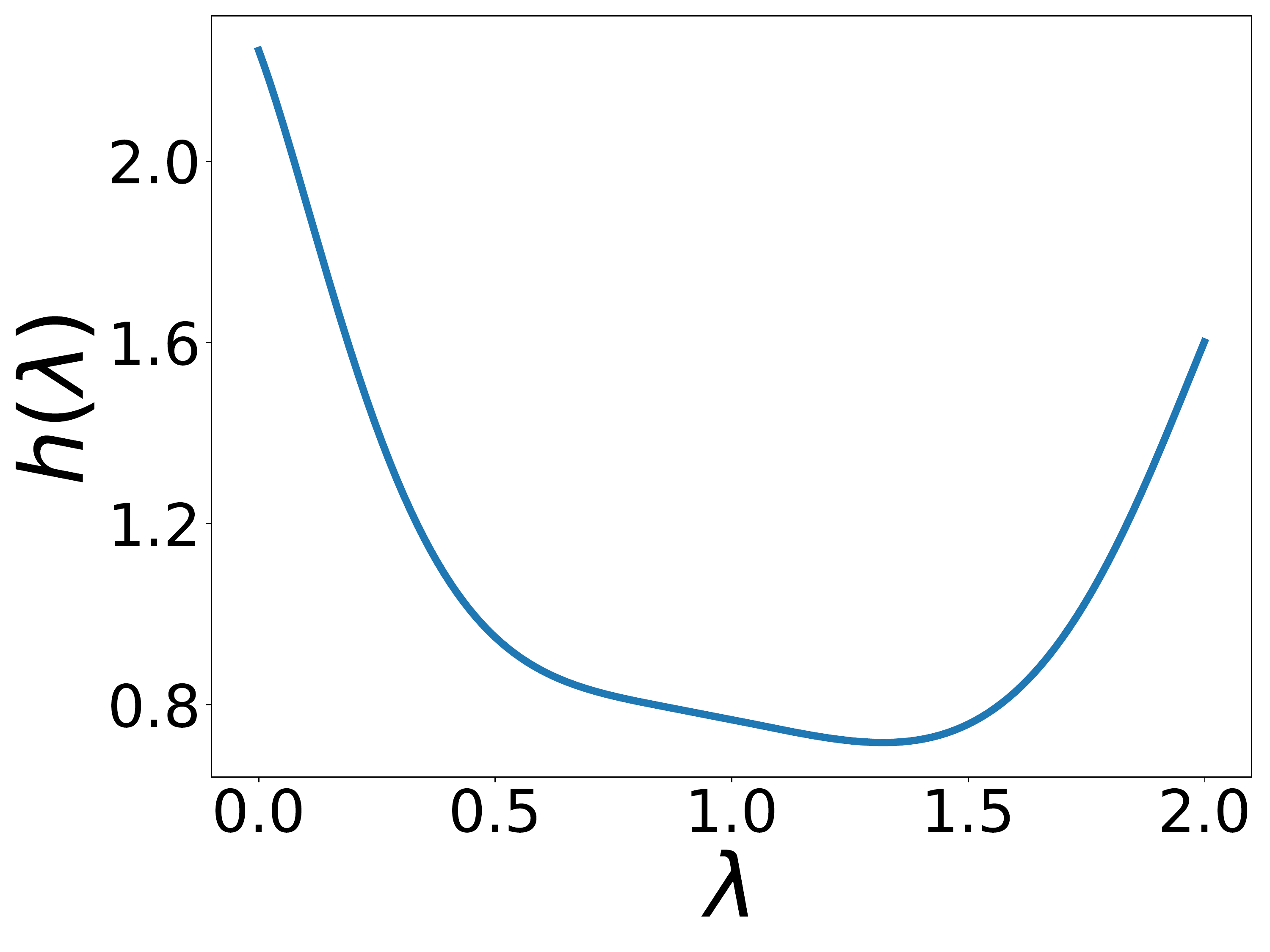}
   }
   \hspace{-2mm}
   \subfigure[Computers]{
   \includegraphics[width=33mm]{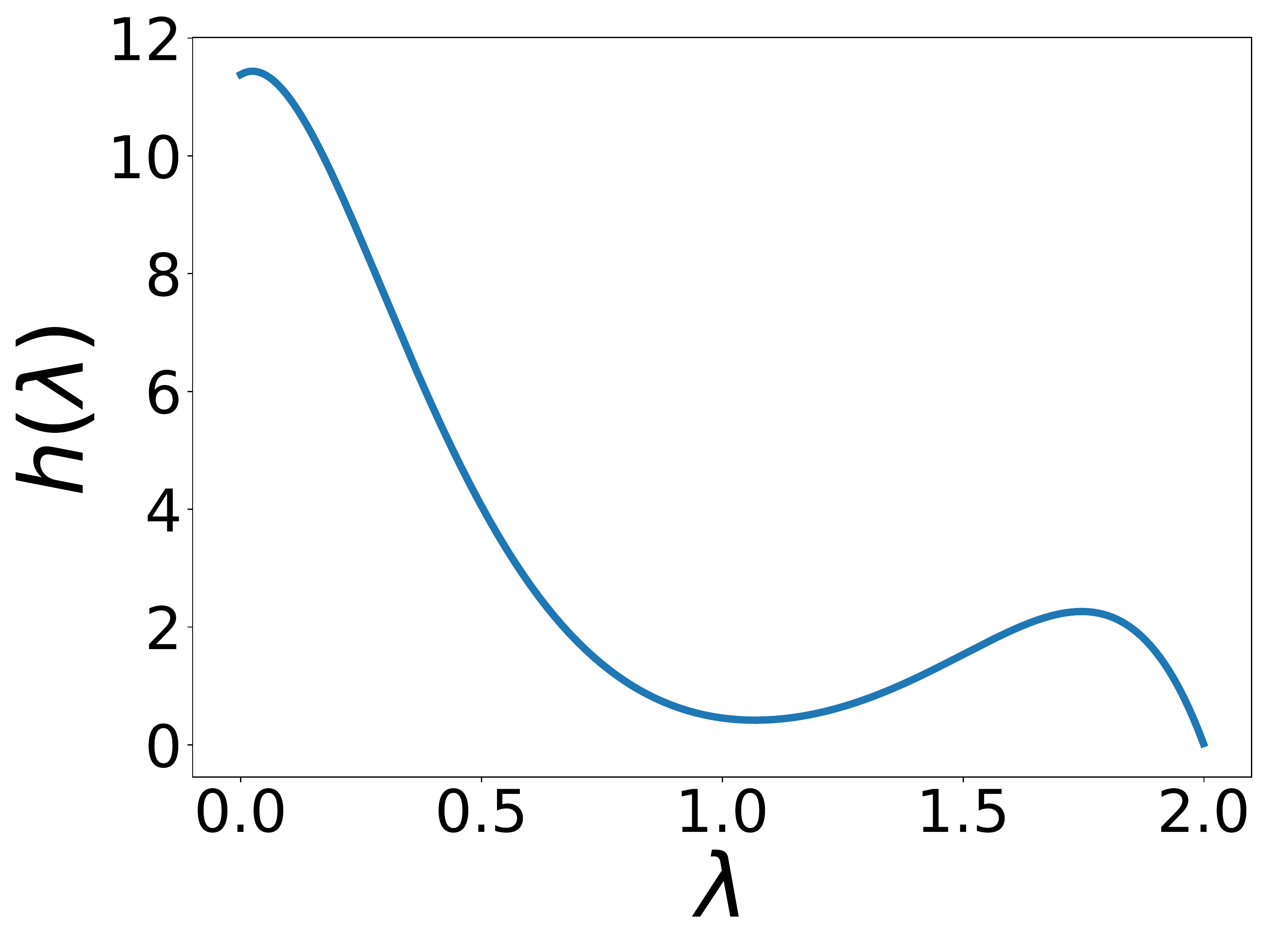}
   }
   \hspace{-2mm}
   \subfigure[Photo]{
   \includegraphics[width=33mm]{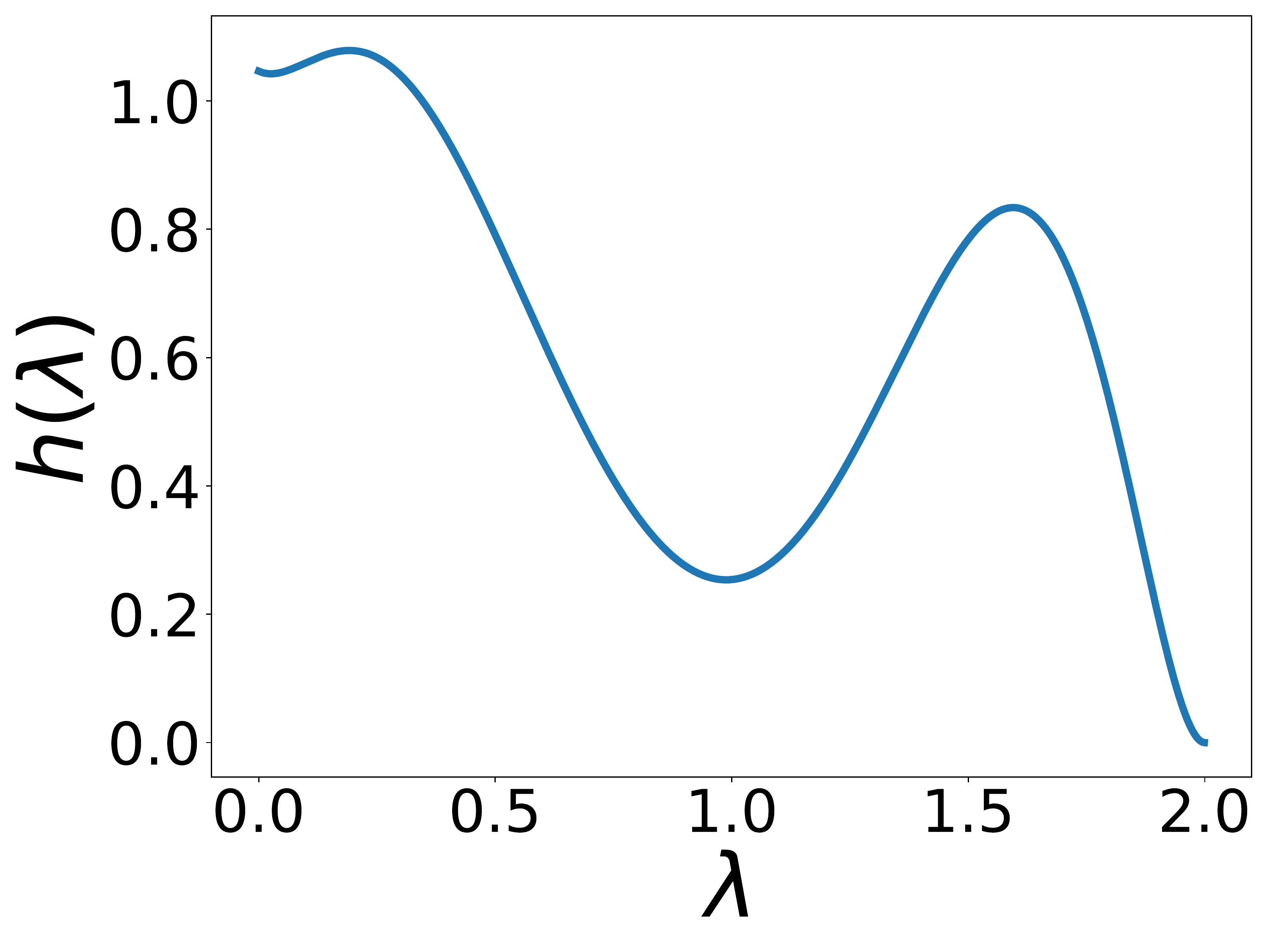}
   }
   \hspace{-2mm}
   \subfigure[Chameleon]{
   \includegraphics[width=33mm]{Chameleon.pdf}
   }
   \hspace{-2mm}
   \subfigure[Actor]{
   \includegraphics[width=33mm]{Actor.pdf}
   }
   \hspace{-2mm}
   \subfigure[Squirrel]{
   \includegraphics[width=33mm]{Squirrel.pdf}
   }
   \hspace{-2mm}
   \subfigure[Texas]{
   \includegraphics[width=33mm]{Texas.pdf}
   }
   \hspace{-2mm}
   \subfigure[Cornell]{
   \includegraphics[width=33mm]{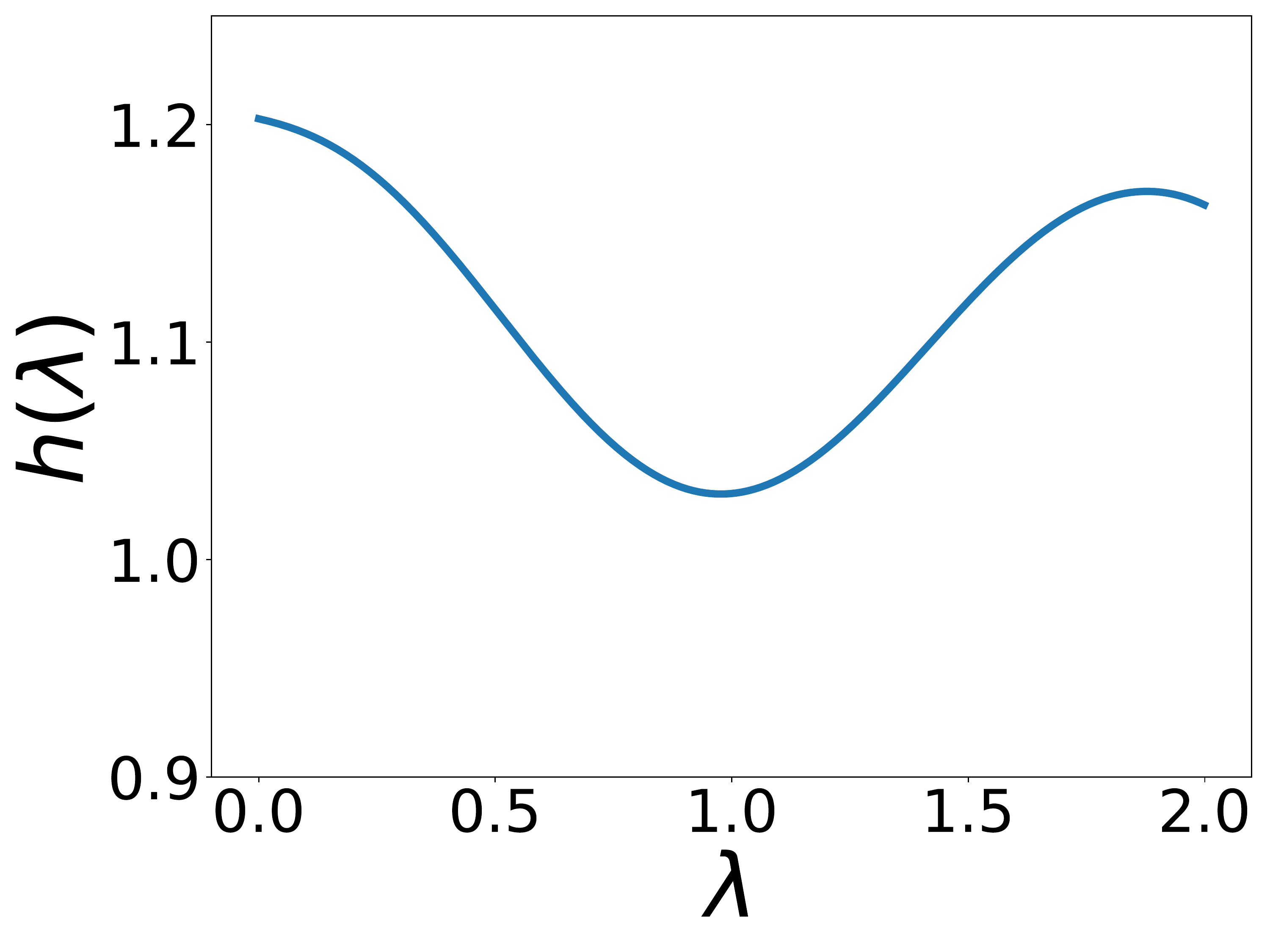}
   }
    \caption{Filters learnt from real-world datasets by BernNet.}
    \label{fig:filters_real_all}
\end{figure}

\begin{figure}[t]
    \centering
    \hspace{-2mm}
   \subfigure[Cora]{
   \includegraphics[width=33mm]{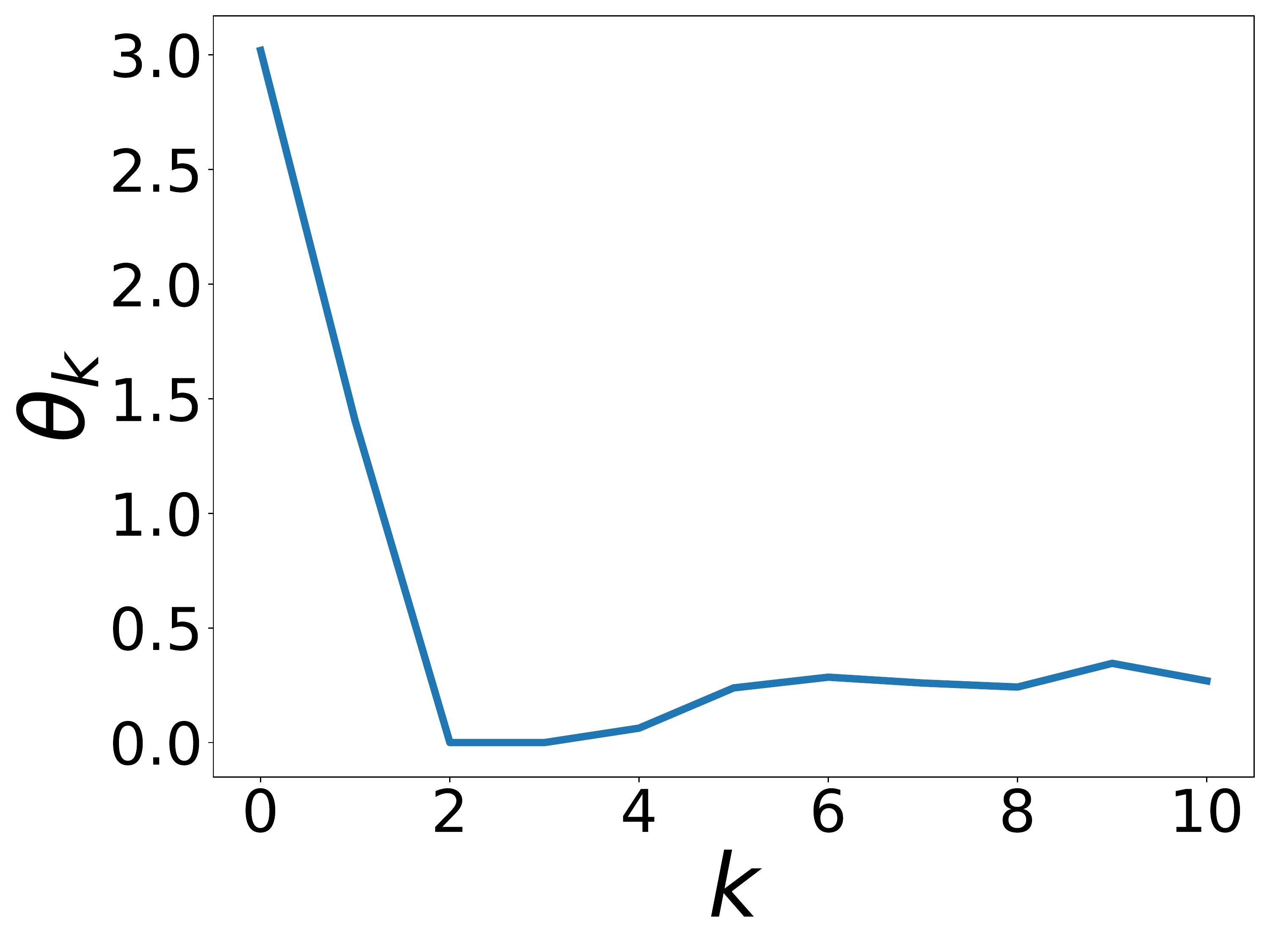}
   }
   \hspace{-2mm}
   \subfigure[CiteSeer]{
   \includegraphics[width=33mm]{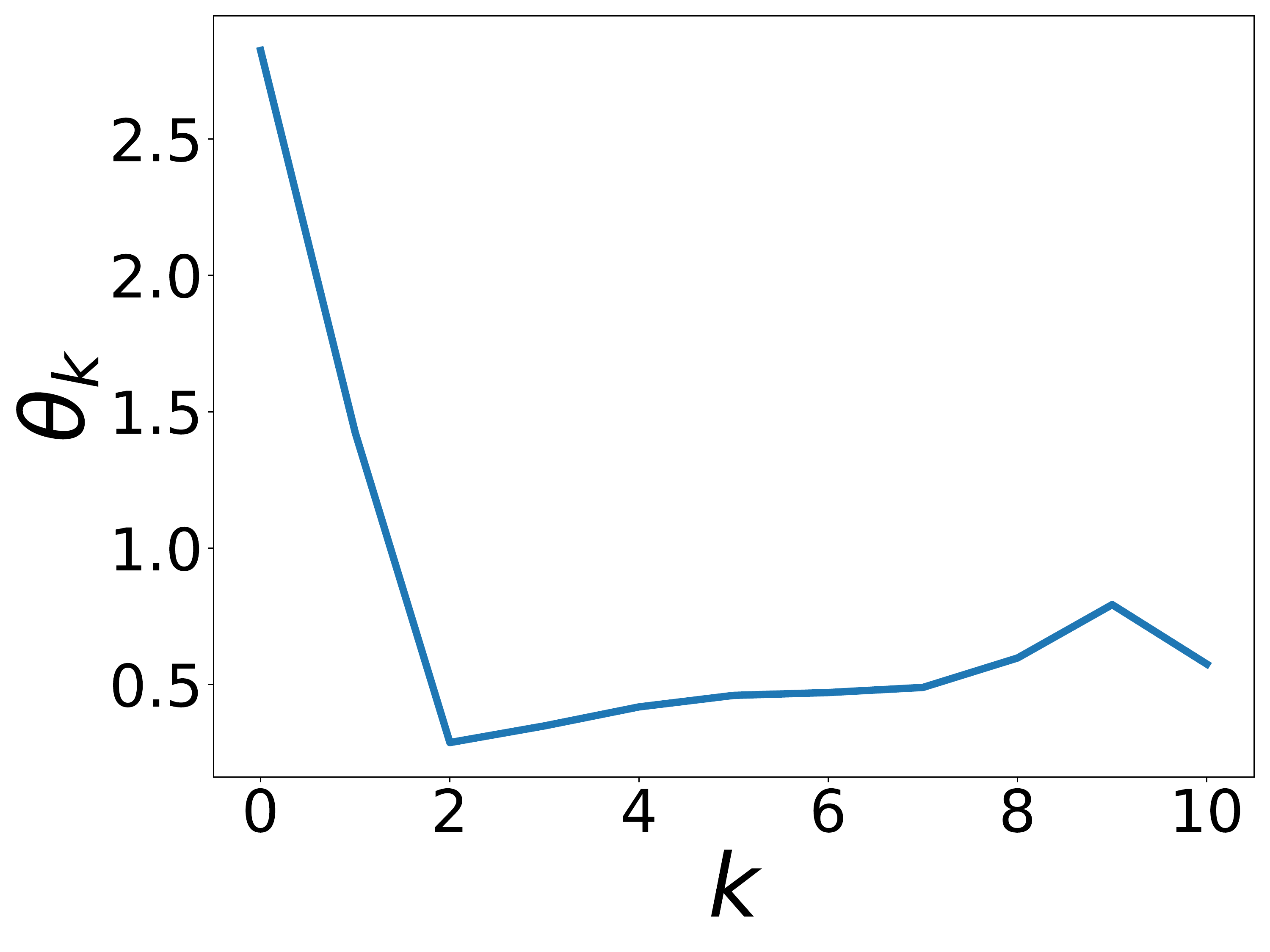}
   }
   \hspace{-2mm}
   \subfigure[PubMed]{
   \includegraphics[width=33mm]{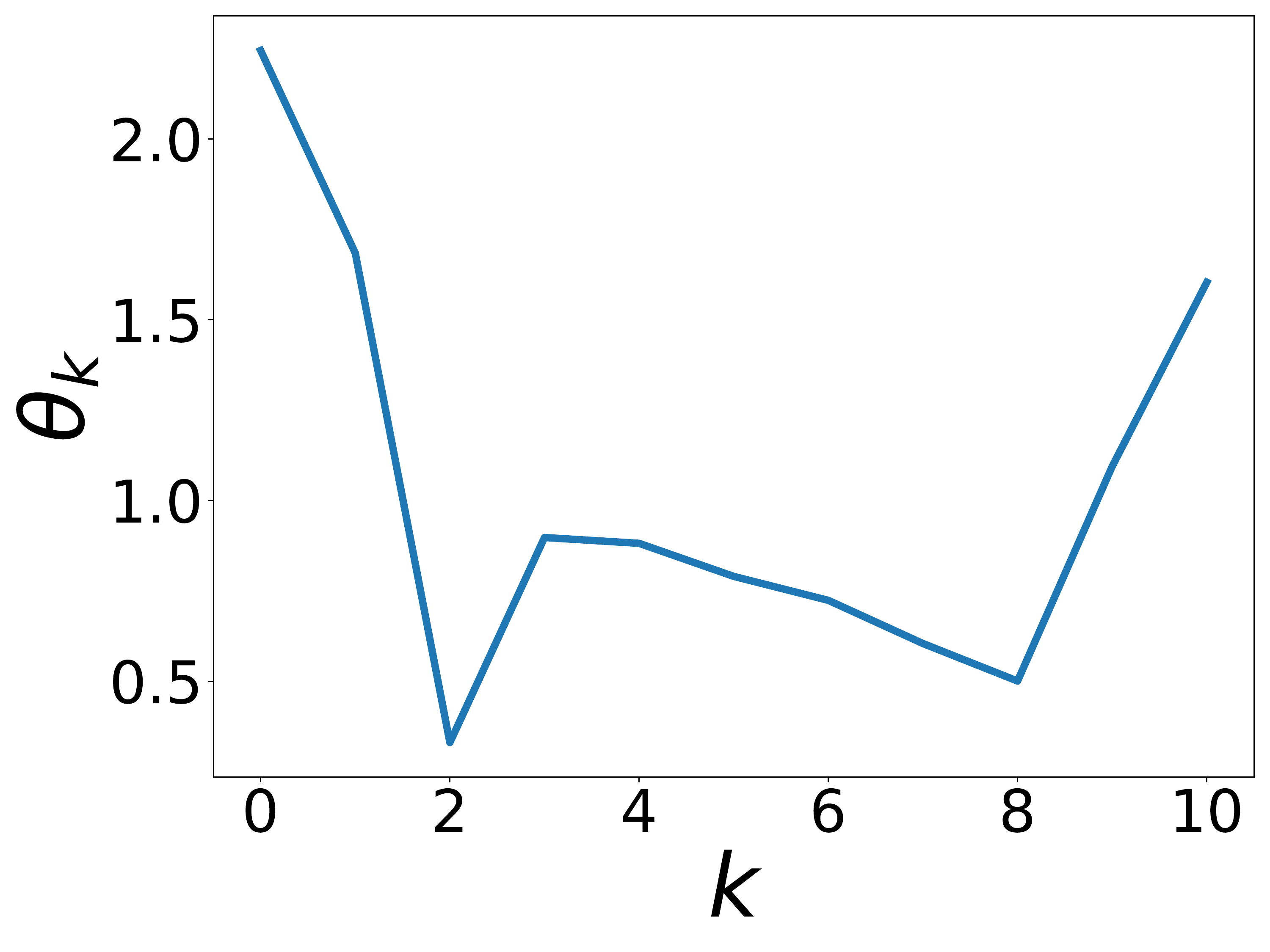}
   }
   \hspace{-2mm}
   \subfigure[Computers]{
   \includegraphics[width=33mm]{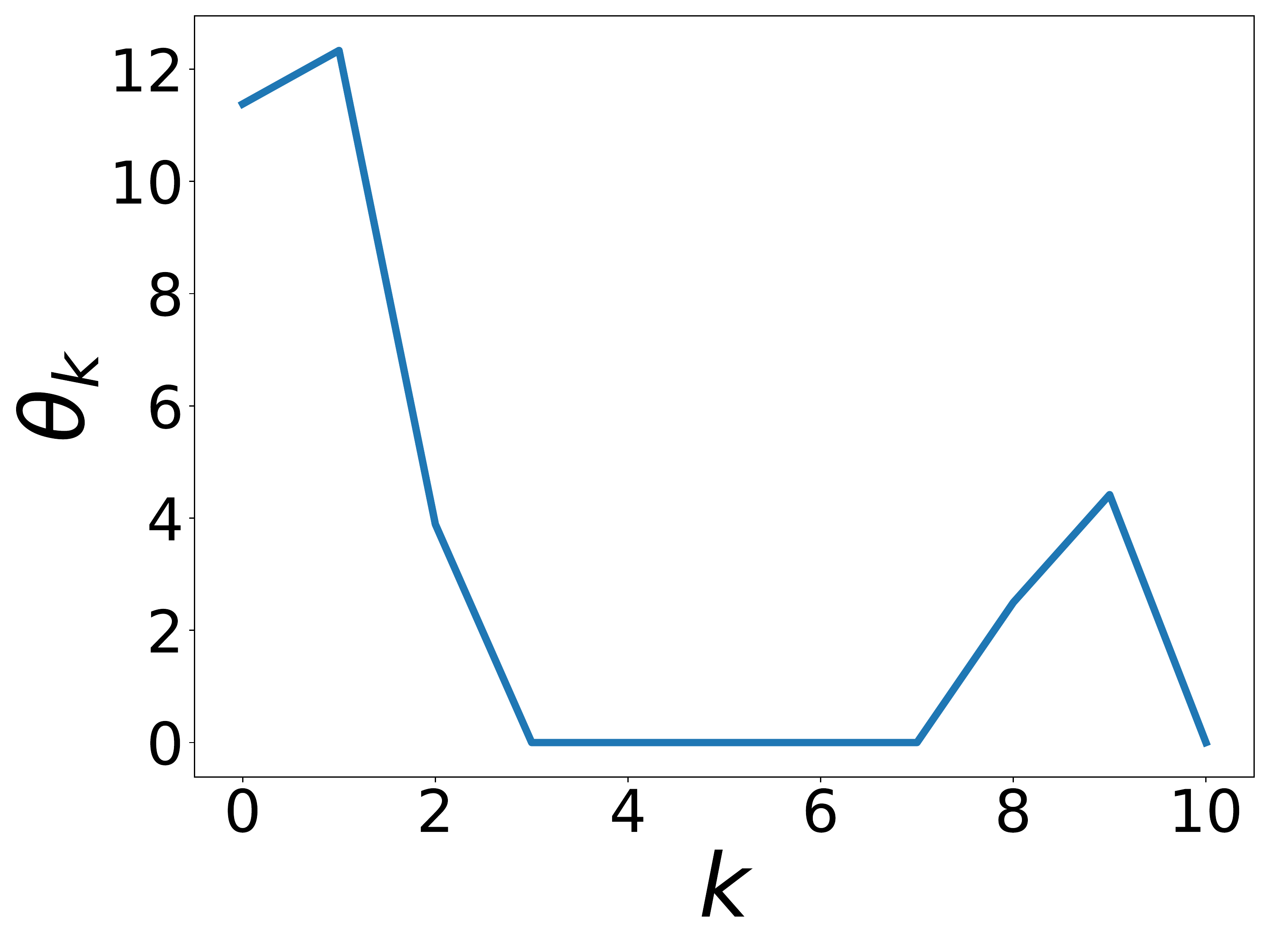}
   }
   \hspace{-2mm}
   \subfigure[Photo]{
   \includegraphics[width=33mm]{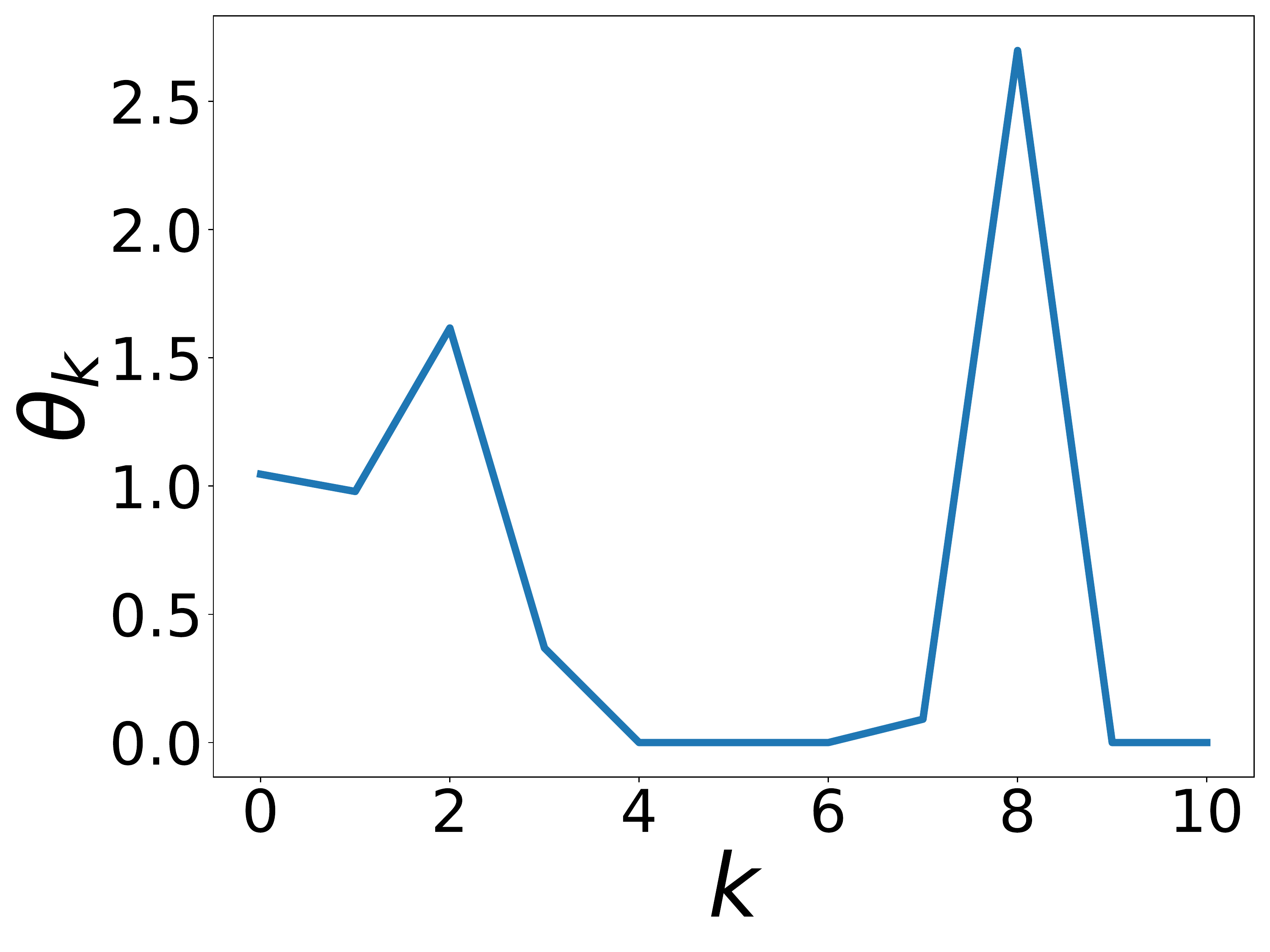}
   }
   \hspace{-2mm}
   \subfigure[Chameleon]{
   \includegraphics[width=33mm]{Chameleon_w.pdf}
   }
   \hspace{-2mm}
   \subfigure[Actor]{
   \includegraphics[width=33mm]{Actor_w.pdf}
   }
   \hspace{-2mm}
   \subfigure[Squirrel]{
   \includegraphics[width=33mm]{Squirrel_w.pdf}
   }
   \hspace{-2mm}
   \subfigure[Texas]{
   \includegraphics[width=33mm]{Texas_w.pdf}
   }
   \hspace{-2mm}
   \subfigure[Cornell]{
   \includegraphics[width=33mm]{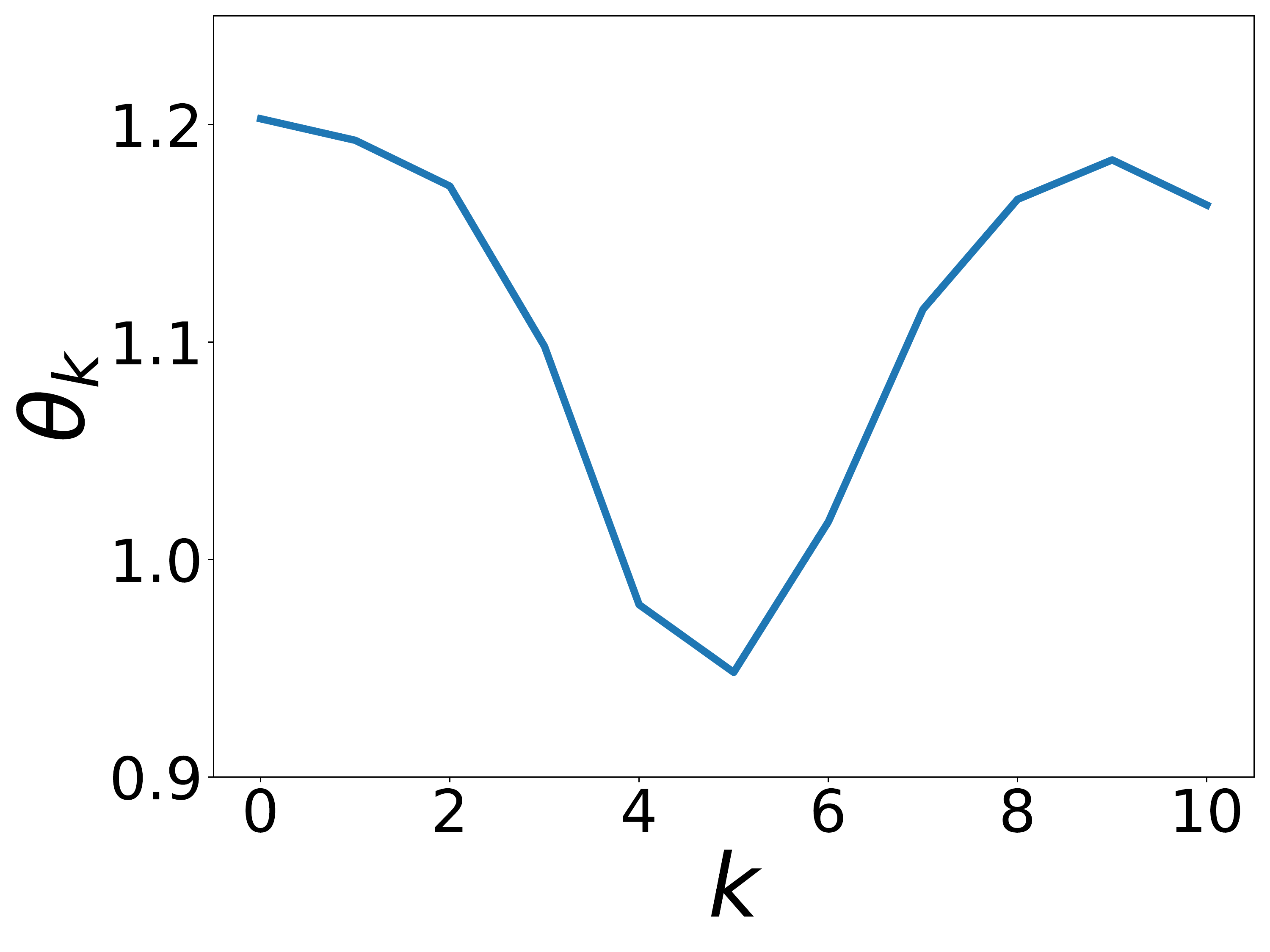}
   }
    \caption{Coefficients $\theta_k$ learnt from real-world datasets by BernNet.}
    \label{fig:coe_real_all}
\end{figure}

\end{document}